\documentclass{article}

\usepackage[preprint]{neurips_2026}

\usepackage[utf8]{inputenc}
\usepackage[T1]{fontenc}
\usepackage{hyperref}
\usepackage{url}
\usepackage{booktabs}
\usepackage{amsfonts}
\usepackage{amsmath}
\usepackage{amssymb}
\usepackage{nicefrac}
\usepackage{microtype}
\usepackage{xcolor}
\usepackage{graphicx}
\usepackage{algorithm}
\usepackage{algorithmic}
\usepackage{multirow}
\usepackage{subcaption}
\usepackage{amsthm}
\usepackage{enumitem}
\usepackage{svg}
\usepackage{caption}

\newtheorem{theorem}{Theorem}

\newtheorem{corollary}[theorem]{Corollary}

\newtheorem{assumption}{Assumption}
\newtheorem{remark}{Remark}
\theoremstyle{definition}

\title{SEAD: Competence-Aware On-Policy Distillation via Entropy-Guided Supervision}

\author{
  \textbf{Chia-Hsuan Lee, Zelei Cheng, Yu Wang, Renkun Ni} \\
  \textbf{Sambit Sahu, Shi-Xiong Zhang, William Campbell} \\
  Capital One
}

\begin{document}

\maketitle
\begin{abstract}
  On-policy distillation (OPD) has a property absent in offline distillation and RL:
  \emph{teacher supervision quality depends on student competence}. Incoherent rollouts yield    
  noisy gradients; already-mastered tokens yield redundant ones. This creates waste at three scales---tokens, training phases, and prompts---yet existing methods supervise uniformly. We introduce \textbf{SEAD},     
  which uses entropy as a unified probe of this competence-dependent degradation at three scales:
   (1)~joint teacher--student entropy partitions tokens into zones receiving tailored divergences
   or zero gradient (${\sim}50\%$ skipped); (2)~a cosine schedule anneals from forward to reverse
   KL as competence grows; (3)~a competence-gated curriculum introduces prompts
  easy-to-hard. These components are symbiotically necessary: token selection
   requires coherent rollouts (curriculum), annealing requires monotonic improvement (also
  curriculum). On OLMo-3 (7B$\to$32B), SEAD achieves +4.8 avg accuracy over vanilla OPD across six math benchmarks, with
   ablations confirming super-additive interactions.
  \end{abstract}


\section{Introduction}
\label{sec:intro}

Large reasoning models (49B+) achieve strong performance but are prohibitively expensive for deployment. Knowledge distillation compresses these capabilities into smaller students. The dominant \emph{off-policy} paradigm---training on static teacher-generated traces---suffers from exposure bias: prediction errors compound autoregressively at inference \citep{agarwal2024gkd, opd_survey_2026}.

On-policy distillation (OPD) addresses this by training on student-generated rollouts scored by the teacher \citep{agarwal2024gkd, lu2025thinking_machines}, matching or exceeding RL methods like GRPO \citep{reopold2026, eopd2026, gopd2026}. However, existing OPD methods apply supervision \emph{uniformly}---the same divergence, on every token, at every phase, for every prompt.

We identify a single structural problem underlying this uniformity: \textbf{in OPD, supervision quality depends on student competence}. Unlike off-policy KD (teacher traces are always coherent) or RL (binary rewards are equally reliable), OPD's per-token teacher corrections are only informative when the student's rollouts are sufficiently coherent. This competence-dependent degradation manifests at three scales:
\begin{itemize}[leftmargin=1.5em, itemsep=1pt]
\item \textbf{Token level:} ${\sim}50\%$ of tokens are deterministic for both models---supervising them wastes compute. Among the remaining, some call for sharpening (RKL) while others require diversity preservation (FKL) \citep{eopd2026}.
\item \textbf{Temporal level:} The optimal divergence evolves from mode-covering (early) to mode-seeking (late), yet prior solutions use manual two-stage switches \citep{reopold2026, paced2026}.
\item \textbf{Prompt level:} Problems beyond the student's capability produce incoherent rollouts on which teacher supervision is noise \citep{rethinking_opd2026, opsd2026}.
\end{itemize}
These are not three independent problems but three symptoms of \emph{ignoring how supervision quality varies with competence}. We observe that \emph{entropy} provides a unified observable for this quantity across all three scales.

We introduce \textbf{SEAD}, a framework operationalizing this principle:
\begin{enumerate}[itemsep=1pt]
    \item \textbf{Token-level: Sparse Entropy-Adaptive Divergence.} Joint teacher-student entropy partitions tokens into Zone~A (skip), Zone~B (RKL), Zone~C (FKL)---jointly determining selection \emph{and} divergence type.
    \item \textbf{Temporal: Competence-Driven Annealing.} Continuous FKL$\to$RKL transition tracking the evolving active token composition.
    \item \textbf{Prompt-level: Competence-Gated Curriculum.} First prompt-level curriculum for OPD, addressing an open problem flagged by \citet{opsd2026, rethinking_opd2026}.
\end{enumerate}
Crucially, these are symbiotically necessary: token selection requires coherent rollouts (curriculum), and annealing requires monotonic competence growth (also curriculum). The ablation (Section~\ref{sec:analysis}) confirms super-additive interactions.

We validate SEAD on OLMo-3 (7B$\to$32B) and Nemotron (8B$\to$49B) across MATH-500,
Minerva-Math, AIME~2024/2025, AMC~2023, and OlympiadBench. The full framework achieves +4.8
average over vanilla OPD.

\begin{figure}[t]
\centering
\includegraphics[width=\textwidth]{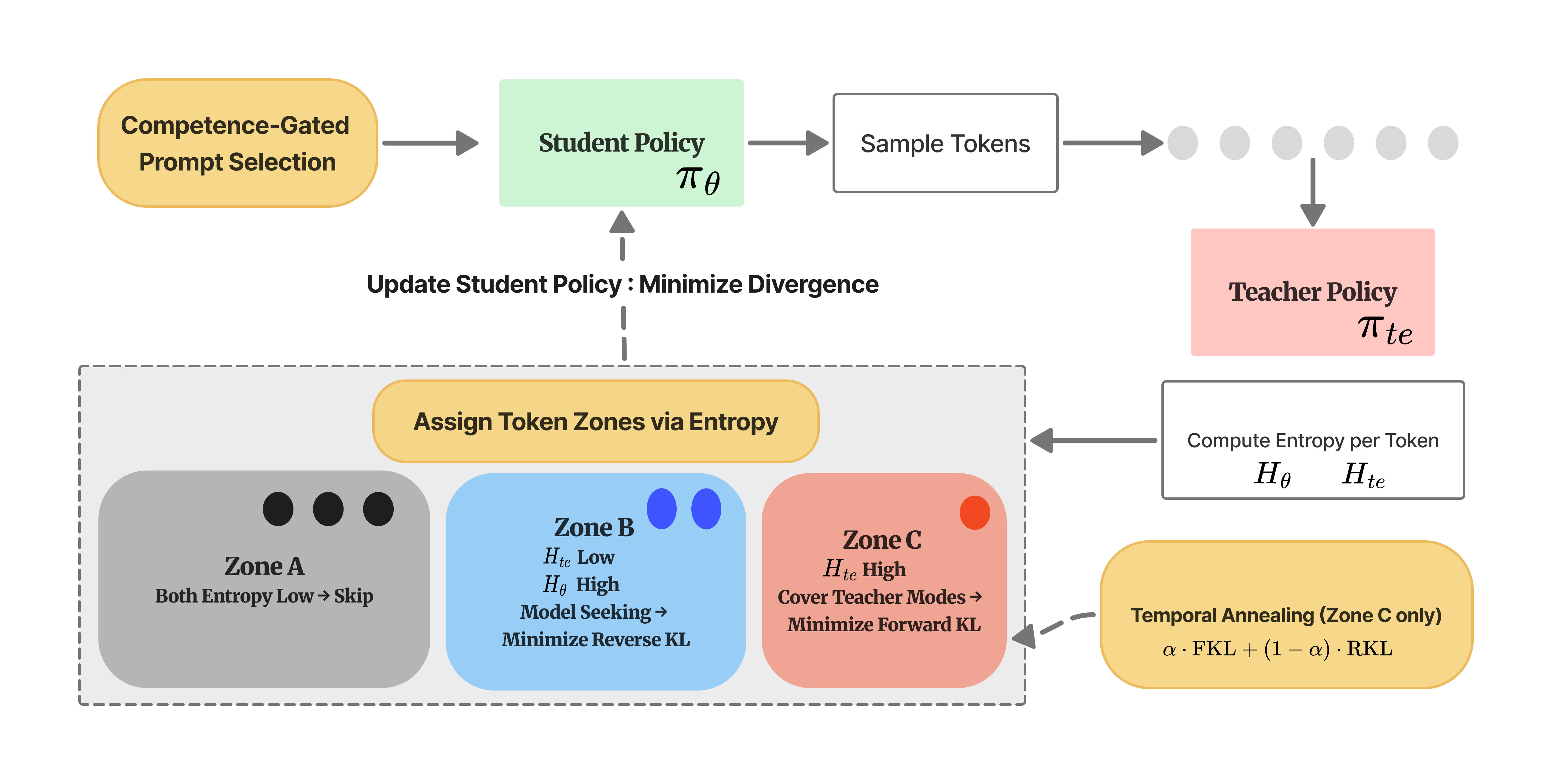}
  \caption{\textbf{Overview of SEAD.} First, the    
  \emph{competence-gated curriculum} (left, yellow) selects prompts within the student's current
  ability frontier ($d_i \leq c(t)$), ensuring rollouts are coherent enough for meaningful
  supervision. The student $\pi_\theta$ generates a rollout, which the teacher $\pi_{\text{te}}$
  scores with per-token logits. From these, we compute joint entropy ($H_\theta$,
  $H_{\text{te}}$) and partition tokens into three zones: \textbf{Zone~A} (gray)---both models
  confident, zero gradient (${\sim}50\%$ of tokens); \textbf{Zone~B} (blue)---teacher confident
  but student uncertain, supervised via reverse KL to sharpen toward the teacher's mode;
  \textbf{Zone~C} (red)---teacher uncertain at reasoning forks, supervised via forward KL to
  preserve multi-path diversity. A \emph{temporal annealing} schedule (bottom-right) modulates
  Zone~C as $\alpha \cdot \mathcal{L}_{\text{FKL}} + (1-\alpha) \cdot \mathcal{L}_{\text{RKL}}$,
  smoothly transitioning from exploration ($\alpha{=}0.8$) to refinement ($\alpha{=}0$) over
  training. The combined loss updates the student. See
  Algorithm~\ref{alg:sead} for the full procedure.}
\label{fig:sead_framework}
\end{figure}

\section{Method}
\label{sec:method}
\subsection{Preliminaries and Problem Setting}
\label{sec:prelim}

Let $\pi_\theta$ and $\pi_{\text{te}}$ denote the student's and teacher's policies respectively. For a given prompt $\mathbf{q}$, let $\mathbf{c}_t = (\mathbf{q}, x_1, \dots, x_{t-1})$ denote the context up to step $t$, and $x_t \in \mathcal{V}$ the generated token. Each OPD iteration: (1)~samples rollouts $\mathbf{x} \sim \pi_{\theta_{\text{old}}}(\cdot | \mathbf{q})$; (2)~queries teacher logits on the rollout; (3)~updates $\pi_\theta$ via a divergence loss. The two standard choices are \textbf{forward KL} (mode-covering):
\begin{equation}
\mathcal{L}_{\text{FKL}}(t) = \sum_{v \in \mathcal{V}} \pi_{\text{te}}(v | \mathbf{c}_t) \log \frac{\pi_{\text{te}}(v | \mathbf{c}_t)}{\pi_\theta(v | \mathbf{c}_t)}
\label{eq:fkl}
\end{equation}
and \textbf{reverse KL} (mode-seeking):
\begin{equation}
\mathcal{L}_{\text{RKL}}(t) = 
\sum_{v \in \mathcal{V}} \pi_\theta(v | \mathbf{c}_t) \log \frac{\pi_\theta(v | \mathbf{c}_t)}{\pi_{\text{te}}(v | \mathbf{c}_t)}
\label{eq:rkl}
\end{equation}
Forward KL encourages the student to cover all modes of the teacher, preserving diversity at reasoning branch points. Reverse KL drives the student to concentrate on the teacher's high-probability modes, yielding sharper outputs but risking premature entropy collapse~\citep{reopold2026}.

\begin{assumption}[OPD supervision quality --- modulated PL condition]\label{as:quality}
Let $\ell_i(\theta)$  denote the per-prompt OPD loss and $\ell_i^\star = \inf_\theta \ell_i(\theta)$.
There exists a non-decreasing function $\phi\colon [0,1] \to [0,1]$ with $\phi(0)=0$, $\phi(1)=1$, and a constant $\mu > 0$ such that for each prompt~$i$:  :
\begin{equation}
\|\nabla \ell_i(\theta)\|^2 \geq \phi(p_i(\theta))\,\mu\,(\ell_i(\theta) - \ell_i^\star).
\label{eq:mod-pl}
\end{equation}
\end{assumption}

\noindent Intuitively, gradient signal scales with competence: when the student cannot solve a problem at all ($p_i \approx 0$), its rollouts are incoherent and $\phi(p_i) \approx 0$, so the lower bound is almost a zero regardless of the teacher's quality. Entropy serves as the observable proxy for this principle: joint teacher--student entropy reveals whether a token position is informative, redundant, or noise. Assumption~\ref{as:quality} motivates a unified principle: \emph{allocate compute only where supervision quality is high}. SEAD operationalizes this at three granularities: (i)~\emph{token-level}---skip tokens where both teacher and student are already confident (zero information gain); (ii)~\emph{temporal}---shift from exploratory FKL to sharpening RKL as competence grows; (iii)~\emph{prompt-level}---restrict training to prompts where rollouts are coherent enough for the teacher to provide meaningful corrections. We present these in order of increasing scope, noting that the prompt-level curriculum (Sec.~\ref{sec:curriculum}) is the foundational enabler: it ensures the entropy landscape is well-behaved, which the token-level selector requires, and it guarantees monotonic competence growth, which the temporal annealer assumes.

\subsection{The Unified SEAD Objective}
\label{sec:combined}

SEAD integrates token-level selection, temporal annealing, and prompt-level curriculum into a single loss. Let $\mathcal{B}$ and $\mathcal{C}$ denote the sets of token indices assigned to RKL and FKL respectively (formally defined in Sec.~\ref{sec:sead}), and let $\lambda > 0$ be a hyperparameter balancing the intrinsic scale differences between the two divergence terms:
\begin{equation}
\mathcal{L}_{\text{SEAD}}(\theta, t_{\text{step}}) = \frac{1}{|\mathcal{B} \cup \mathcal{C}|} \left( \sum_{t \in \mathcal{B}} \mathcal{L}_{\text{RKL}}(t) + \alpha(t_{\text{step}}) \cdot \lambda \sum_{t \in \mathcal{C}} \mathcal{L}_{\text{FKL}}(t) \right), \quad \mathbf{q} \sim \text{Uniform}(\mathcal{D}(t_{\text{step}}))
\label{eq:sead_unified}
\end{equation}
Three control variables---all functions of training progress, all serving the same principle (allocate compute only where supervision quality is high):
\begin{itemize}[leftmargin=1.5em, itemsep=0pt]
    \item \textbf{Zone partition} $\{\mathcal{A}, \mathcal{B}, \mathcal{C}\}$: which tokens receive gradient and which divergence (Sec.~\ref{sec:sead});
    \item \textbf{Annealing coefficient} $\alpha(t_{\text{step}})$: FKL/RKL balance evolving with competence (Sec.~\ref{sec:anneal});
    \item \textbf{Eligible set} $\mathcal{D}(t_{\text{step}}) = \{\mathbf{q}_i : d_i \leq c(t_{\text{step}})\}$: prompts within the competence frontier (Sec.~\ref{sec:curriculum}).
\end{itemize}

\begin{algorithm}[t]
\caption{SEAD: Competence-Aware On-Policy Distillation}
\label{alg:sead}
\begin{algorithmic}[1]
\REQUIRE Student $\pi_\theta$, teacher $\pi_{\text{te}}$, prompts $\mathcal{Q}$ with difficulty scores $\{d_i\}$, zone percentiles $(\rho_A, \rho_B, \rho_C)$, annealing schedule $\alpha(\cdot)$, competence function $c(\cdot)$
\STATE Precompute per-prompt difficulty $d_i = 1 - p_i$ via student pass rate
\FOR{$t_{\text{step}} = 1$ to $T_{\text{total}}$}
    \STATE $\mathcal{D} \leftarrow \{\mathbf{q}_i : d_i \leq c(t_{\text{step}})\}$ \hfill \COMMENT{Competence-gated curriculum}
    \STATE Sample prompt batch $\{\mathbf{q}_i\} \subset \mathcal{D}$; generate rollouts $\mathbf{x}_i \sim \pi_{\theta_{\text{old}}}(\cdot | \mathbf{q}_i)$
    \STATE Query teacher $\pi_{\text{te}}(\cdot | \mathbf{c}_t)$; compute $H_{\text{te}}(t)$, $H_\theta(t)$ per position
    \STATE Assign tokens to Zone A, B, C via percentile thresholds
    \STATE $\alpha \leftarrow \alpha(t_{\text{step}})$ \hfill \COMMENT{Competence-driven annealing}
    \STATE $\mathcal{L} \leftarrow \frac{1}{|\mathcal{B} \cup \mathcal{C}|}\!\left(\sum_{t \in \mathcal{B}} \mathcal{L}_{\text{RKL}}(t) + \alpha \lambda \sum_{t \in \mathcal{C}} \mathcal{L}_{\text{FKL}}(t)\right)$
    \STATE Update $\theta$ via clipped gradient step on $\mathcal{L}$
\ENDFOR
\end{algorithmic}
\end{algorithm}

\begin{remark}[Symbiotic necessity]\label{rem:symbiosis}
The three components are preconditions for each other: \textbf{without curriculum}, incoherent rollouts corrupt the entropy landscape, causing zone misclassification (confirmed in Table~\ref{tab:ablation}); \textbf{without annealing}, a fixed divergence ratio drives early mode collapse or late over-exploration; \textbf{without token selection}, ${\sim}50\%$ of compute is wasted on tokens where supervision is already maximal.
\end{remark}

\subsection{Token-Level: Sparse Entropy-Adaptive Divergence}
\label{sec:sead}

We define token-level entropy $H_{\text{te}}(t) = -\sum_{v} \pi_{\text{te}}(v | \mathbf{c}_t) \log \pi_{\text{te}}(v | \mathbf{c}_t)$ and $H_{\theta}(t)$ analogously. To maintain computational tractability and avoid the prohibitive overhead of a full softmax over the entire vocabulary $\mathcal{V}$ during training, $H_{\text{te}}(t)$ is approximated using the top-$k$ vocabulary subset. SEAD partitions tokens into
\begin{itemize}[itemsep=2pt]
    \item \textbf{Zone A} (Skip, ${\sim}\rho_A\%$): Both $H_{\text{te}}(t)$ and $H_\theta(t)$ low. \textbf{Zero gradient}---the vast majority of tokens (connectives, formatting, deterministic steps) fall here.
    \item \textbf{Zone B} (RKL, ${\sim}\rho_B\%$): $H_{\text{te}}(t)$ low, $H_\theta(t)$ high. Student should \emph{sharpen} toward the confident teacher.
    \item \textbf{Zone C} (FKL, ${\sim}\rho_C\%$): $H_{\text{te}}(t)$ high. Genuine reasoning forks---student should \emph{cover} teacher modes.
\end{itemize}
The per-step loss is:
\begin{equation}
\mathcal{L}_{\text{SEAD}}^{\text{token}} = \frac{1}{|\mathcal{B} \cup \mathcal{C}|} \left( \sum_{t \in \mathcal{B}} \mathcal{L}_{\text{RKL}}(t) + \lambda \sum_{t \in \mathcal{C}} \mathcal{L}_{\text{FKL}}(t) \right)
\label{eq:sead}
\end{equation}
with defaults $\rho_A = 50$, $\rho_B = 40$, $\rho_C = 10$. Unlike EOPD \citep{eopd2026} (teacher entropy only) and TIP \citep{tip2026} (weighting, single divergence), SEAD jointly determines both \emph{selection} and \emph{divergence type} via \emph{joint} teacher-student entropy, with extreme sparsity (${\sim}20\%$ active).

\paragraph{Sparse selection is approximately lossless.}
The following theorem shows that Zone~A tokens contribute negligible gradient, justifying their exclusion. The key precondition is that Zone~A tokens have genuinely low entropy for \emph{both} teacher and student---a property that holds when the curriculum (Sec.~\ref{sec:curriculum}) ensures coherent rollouts, preventing spurious low-entropy assignments from misaligned contexts.

\begin{theorem}\label{thm:sparse-gradient}
Under mild regularity (bounded score functions) and the assumption that Zone~A tokens satisfy $H_{\mathrm{te}}(t) \leq \tau$, $H_\theta(t) \leq \tau$ with $\mathrm{TV}(\pi_{\mathrm{te}}, \pi_\theta) \leq \delta(\tau)$, the full gradient $g$ and sparse gradient $\hat{g}$ (computed on Zones B$\cup$C only) satisfy:
\begin{equation}
  \|g - (1-s)\hat{g}\| \leq s\,G\bigl(\sqrt{2\tau} + 2\,\delta(\tau) + \sqrt{\tau/2}\,\log|\mathcal{V}|\bigr) = O(s\,G\sqrt{\tau}\,\log|\mathcal{V}|)
  \label{eq:sparse-bound}
\end{equation}
where $s = |\mathcal{A}|/N < 1$ is the skip fraction ($N$ = sequence length), $G$ bounds score function norms, and $|\mathcal{V}|$ is the vocabulary size.
\end{theorem}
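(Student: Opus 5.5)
We start by fixing the normalization. The full gradient averages over all $N$ positions, $g = \frac{1}{N}\sum_{t=1}^N \nabla_\theta \mathcal{L}(t)$. The sparse gradient averages over the active positions, $\hat g = \frac{1}{|\mathcal{B}\cup\mathcal{C}|}\sum_{t\in\mathcal{B}\cup\mathcal{C}} \nabla_\theta\mathcal{L}(t)$, with $|\mathcal{B}\cup\mathcal{C}| = (1-s)N$. Then $(1-s)\hat g = \frac1N\sum_{t\notin\mathcal{A}}\nabla_\theta\mathcal{L}(t)$, so
\[
g-(1-s)\hat g = \frac{1}{N}\sum_{t\in\mathcal{A}}\nabla_\theta\mathcal{L}(t),
\qquad
\|g-(1-s)\hat g\|\le s\max_{t\in\mathcal{A}}\|\nabla_\theta\mathcal{L}(t)\|.
\]
The whole theorem therefore reduces to a \emph{per-token} bound: for any position with $H_{\mathrm{te}}(t),H_\theta(t)\le\tau$ and $\mathrm{TV}\le\delta(\tau)$, we need
\[
\|\nabla_\theta\mathcal{L}(t)\|\le G\bigl(\sqrt{2\tau}+2\delta(\tau)+\sqrt{\tau/2}\log|\mathcal{V}|\bigr).
\]
Here $\mathcal{L}(t)$ is the divergence that would have been applied at that position; we treat the RKL form and note that FKL is analogous.

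For the per-token bound we write the gradient in score-function form. With $\pi=\pi_\theta(\cdot|\mathbf{c}_t)$, $q=\pi_{\mathrm{te}}(\cdot|\mathbf{c}_t)$ and $s_v=\nabla_\theta\log\pi(v)$, where $\|s_v\|\le G$, we have
\[
\nabla\mathcal{L}_{\mathrm{RKL}}=\sum_v\pi(v)\,s_v\,\log\frac{\pi(v)}{q(v)}
\qquad\text{(the }\sum_v\pi(v)s_v=0\text{ term drops)}.
\]
Let $v^\star$ be the mode of $\pi$. We split the sum into the mode term and the tail $v\ne v^\star$.
\begin{itemize}
\item \textbf{Mode term.} It contributes $G\,\pi(v^\star)\,|\log\pi(v^\star)-\log q(v^\star)|$. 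We control it by the TV condition, using that $q(v^\star)\ge\pi(v^\star)-\delta$ and that both are close to $1$, which gives an $O(\delta)$ bound. The factor $2$ in $2\delta$ comes from a triangle inequality applied once to $\pi$ and once to $q$ mass.
\item \textbf{Tail mass.} Low entropy forces small off-mode mass. A Fano/Pinsker-type argument gives $1-\pi(v^\star)\lesssim\sqrt{2\tau}$; concretely, $\mathrm{TV}(\pi,\delta_{v^\star})\le\sqrt{\tfrac12 D}$ with $D\le H$, so $\sqrt{\tau/2}$, and up to constants $\sqrt{2\tau}$. The tail then contributes at most $G\sum_{v\ne v^\star}\pi(v)|\log\pi(v)|$ plus $G\sum_{v\ne v^\star}\pi(v)|\log q(v)|$.
\item \textbf{First tail piece.} This is bounded by the entropy, which is at most $\tau$, and in turn by $\sqrt{2\tau}$ for $\tau\le 2$.
\item \textbf{Second tail piece.} This is the cross-entropy-type term. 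We bound it by the tail mass $\sqrt{\tau/2}$ times a $\log|\mathcal{V}|$ factor coming from the worst-case log-ratio over the vocabulary. This is where the $\sqrt{\tau/2}\log|\mathcal{V}|$ term arises.
\end{itemize}

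The main obstacle is this last step: $|\log q(v)|$ is unbounded when the teacher assigns near-zero mass to a token that the student puts tail mass on. I would handle it in one of two ways. The first is to use that $q$ is computed on a top-$k$ support with a smoothed or floored distribution, so that $\log(1/q(v))\lesssim\log|\mathcal{V}|$ is built into the "mild regularity". The second is to apply the standard continuity bound for entropy and cross-entropy (Fannes/Audenaert), $|H(p)-H(p')|\le \mathrm{TV}\log|\mathcal{V}|+h(\mathrm{TV})$, with the tail TV of order $\sqrt{\tau/2}$. I would try the Fannes-type route first, since it yields exactly the $\log|\mathcal{V}|$ dependence, and state the floor assumption explicitly as part of the regularity.

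Summing the three contributions gives the displayed bound. Multiplying by $s$ from the first reduction completes the proof, and the $O(sG\sqrt\tau\log|\mathcal{V}|)$ form follows because $\delta(\tau)=O(\sqrt\tau)$ under Pinsker-type relations between the two near-deterministic distributions.
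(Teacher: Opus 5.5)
Your reduction $g-(1-s)\hat g=\frac1N\sum_{t\in\mathcal A}\nabla_\theta\mathcal L(t)$, the score-function form of the RKL gradient, the mode/tail split, and the tail-mass bound $1-\pi(v^\star)\le\sqrt{\tau/2}$ all match the paper. Your Pinsker argument against the point mass at $v^\star$, using $\log(1/\pi(v^\star))\le H(\pi)$, is cleaner than the paper's binary-entropy remark.

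The gap is the step you yourself call the main obstacle, and your preferred fix does not close it. Fannes/Audenaert controls $|H(\pi)-H(q)|$. The term you must bound is $\sum_{v\neq v^\star}\pi(v)\log(1/q(v))$, a cross-entropy piece that TV and the two entropies do not control. Concretely, take the logits as $\theta$, so the scores are $e_v-\pi$ and $G\le\sqrt2$. Use two tokens with $\pi=(1-\epsilon,\epsilon)$ and $q=(1-\eta,\eta)$, where $\eta=e^{-1/\epsilon^{2}}$. Both entropies are $O(\epsilon\log(1/\epsilon))$ and $\mathrm{TV}\approx\epsilon$, yet the logit gradient of $\mathcal L_{\mathrm{RKL}}$ is $\approx(-1/\epsilon,\,1/\epsilon)$, whose norm diverges. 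So the bound is false under bounded scores plus the entropy and TV hypotheses alone, and a pointwise assumption is unavoidable. The paper's proof supplies exactly this in the appendix as Assumption~\ref{ass:bounded-ratio}. That assumption bounds the log-density \emph{ratio}, $|\log\pi_\theta(v)-\log\pi_{\mathrm{te}}(v)|\le\log|\mathcal V|$ for off-mode $v$, and the proof multiplies it by the off-mode mass $\sqrt{\tau/2}$.

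Replace your floor alternative by that ratio bound. Taken literally, $q(v)\ge1/|\mathcal V|$ for every $v\neq v^\star$ (and $\pi_\theta$, being a softmax, has full support) forces $q(v^\star)\le1/|\mathcal V|$, contradicting $H_{\mathrm{te}}\le\tau$. A weaker floor $|\mathcal V|^{-k}$ is consistent, but it changes the constant to $k\sqrt{\tau/2}\log|\mathcal V|$. The paper's parenthetical justification of its assumption via a $1/|\mathcal V|$ floor has the same defect, so treat the ratio bound as a bare assumption. The need for an absolute floor is an artifact of splitting $\log(\pi/q)$ into $\log\pi$ and $\log q$. Bound the whole coefficient by the ratio assumption instead; the off-mode contribution is then at most $G\sqrt{\tau/2}\log|\mathcal V|$ directly, and your separate entropy piece becomes unnecessary.

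Two smaller points. First, FKL is not merely ``analogous'': $\nabla\mathcal L_{\mathrm{FKL}}=-\sum_v(q(v)-\pi(v))s_v$ gives $2G\delta$ immediately, and that is where the paper's $2\delta$ comes from. Your RKL mode term is really $\le\delta\,\pi(v^\star)/\min\{\pi(v^\star),q(v^\star)\}$. This is $\le2\delta$ only once $q(v^\star)\ge1/2$, e.g.\ when $\sqrt{\tau/2}+\delta\le1/2$; it does not come from a triangle inequality. Second, the final $O(\cdot)$ form needs $\delta(\tau)=O(\sqrt\tau)$. That follows from the common-mode condition in Assumption~\ref{ass:agreement}, not from Pinsker, since two distributions with entropy $\le\tau$ but different modes have TV near $1$.
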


The bound vanishes as $\tau \to 0$: Zone~A tokens are selected precisely for near-zero entropy, making their gradient contributions negligible. Crucially, this guarantee requires the entropy landscape to be \emph{well-behaved}---which holds when curriculum ensures coherent rollouts.

\begin{remark}[Implementation scaling]\label{rem:scaling}
In practice (Eq.~\ref{eq:sead}, Algorithm~\ref{alg:sead}), we normalize by $|\mathcal{B}\cup\mathcal{C}|$ rather than the full sequence length $N$, implicitly rescaling the gradient by $1/(1-s)$. This is equivalent to using an effective learning rate $\eta_{\mathrm{eff}} = \eta/(1-s)$ on the active tokens, which we absorb into learning rate tuning. The directional guarantee of Theorem~\ref{thm:sparse-gradient} is unaffected: $\cos(g, \hat{g}) \to 1$ as $\tau \to 0$, ensuring the sparse update follows the same descent direction as the dense baseline.
\end{remark}

\paragraph{Connection to temporal scheduling.} As the student improves, Zone~A expands and the active set shifts toward Zone~B. The optimal FKL/RKL ratio among active tokens thus evolves, motivating a temporal schedule.

\subsection{Temporal: Competence-Driven Divergence Annealing}
\label{sec:anneal}

As the student masters more tokens, the active token composition shifts: early in training Zone~C (both uncertain) is large, calling for mode-covering FKL; late in training Zone~B (teacher confident, student uncertain) dominates, calling for mode-seeking RKL. The annealing schedule tracks this:
\begin{equation}
\alpha(t_{\text{step}}) = \alpha_{\text{end}} + \frac{\alpha_{\text{start}} - \alpha_{\text{end}}}{2}\left(1 + \cos\!\left(\frac{t_{\text{step}}}{T_{\text{total}}} \cdot \pi\right)\right)
\label{eq:anneal}
\end{equation}
decreasing from $\alpha_{\text{start}} = 0.8$ to $\alpha_{\text{end}} = 0.0$. Combined with SEAD zones, $\alpha$ modulates Zone~C's FKL weight in Eq.~\ref{eq:sead_unified}, implementing a continuous exploration$\rightarrow$refinement transition that generalizes the discrete two-stage switches of \citet{reopold2026} and \citet{paced2026}.

Temporal annealing assumes monotonically improving competence---an assumption that can be violated under uniform prompt sampling. The following curriculum ensures this precondition holds.

\subsection{Prompt-Level: Competence-Gated Curriculum}
\label{sec:curriculum}

The deepest manifestation of the supervision quality principle (Assumption~\ref{as:quality}): on prompts far beyond the student's capability, $p_i(\theta) \approx 0$ implies $\phi(p_i(\theta)) \approx 0$, so the modulated PL condition yields vanishing gradient signal. The rollouts are incoherent and teacher corrections amount to noise. This degradation is absent in off-policy KD (teacher traces are always coherent, $\phi \equiv 1$) and RL (binary rewards are equally reliable regardless of rollout quality). The curriculum keeps $\phi(p_i(\theta))$ bounded away from zero on all eligible prompts---a precondition for both Theorem~\ref{thm:sparse-gradient} (the entropy landscape must be well-behaved for Zone~A tokens to genuinely have low entropy) and temporal annealing (competence must grow monotonically for the schedule to track the correct FKL/RKL ratio).

\paragraph{Difficulty estimation.}
We sample $K$ rollouts per prompt from $\pi_{\theta_0}$ and compute difficulty $d_i = 1 - p_i$ where $p_i = \frac{1}{K}\sum_k \mathbf{1}[\mathrm{correct}(\mathbf{x}^{(k)}_i)]$. Pass rate is a binary discretization of rollout entropy, maintaining the entropy-as-unified-signal principle.

\paragraph{Competence-based progression.}
Following \citet{platanios2019competence}, a competence function $c(t) = \min(1, c_0(1 + t/T)^p)$ grows monotonically, with eligible set $\mathcal{D}(t) = \{\mathbf{q}_i : d_i \leq c(t)\}$. In OPD, $c(t)$ delineates prompts where rollouts are coherent enough for informative teacher supervision. 

\paragraph{Staleness of difficulty scores.}
Because $d_i$ is computed from the initial student $\pi_{\theta_0}$, the relative ordering of prompts may drift as training progresses. While monotonic improvement guarantees that initially ``easy'' prompts remain solvable, mid-tier prompts may improve at different rates. Consequently, we rely on the curriculum as a coarse filter rather than a strict total ordering, trading precise difficulty tracking for computational efficiency. The primary risk is minor sample inefficiency, not gradient corruption, as the curriculum's only theoretical requirement is maintaining a minimum competence floor $\phi(p_i(\theta)) \geq c > 0$ on all eligible prompts.

Under Assumption~\ref{as:quality}, and assuming $\beta$-smoothness of the loss and bounded gradient variance $\sigma^2$ (see Appendix~\ref{app:proof-curriculum}), we have the following theorem:

\begin{theorem}[Curriculum convergence]\label{thm:curriculum}
The curriculum achieves $\epsilon$-accuracy in $T^{\textup{C}} = \mathcal{O}(\beta\sigma^2/((c\mu)^2\epsilon))$ steps, versus $T^{\textup{U}} \geq \Omega(\beta\sigma^2/((\bar\phi^{\textup{U}}\mu)^2\epsilon))$ for uniform sampling, where $\bar\phi^{\textup{U}} = \frac{1}{n}\sum_i \phi(p_i(\theta))$ is the average modulation under uniform sampling. This yields a theoretical speedup factor of $S^2 = (c/\bar\phi^{\textup{U}})^2$ in the worst case---\textbf{a separation unique to OPD} and absent in off-policy KD ($\phi \equiv 1$) or RL (unbiased gradients regardless of competence).
\end{theorem}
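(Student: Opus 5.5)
The plan is the standard SGD-under-PL argument, run with the modulation factor from Assumption~\ref{as:quality} in place of a fixed PL constant. The upper bound comes from the competence floor, and the lower bound from the degraded effective PL constant under uniform sampling.

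\textbf{Upper bound (curriculum).} Write the objective over the eligible set as $L^{\mathrm{C}}(\theta)=\frac{1}{|\mathcal{D}|}\sum_{i\in\mathcal{D}}\ell_i(\theta)$. Treat the loss gap as $L^{\mathrm{C}}-L^{\star}$, which requires an interpolation-type condition $\ell_i^\star$ attained at a common $\theta^\star$; I would state this explicitly in the appendix. Summing \eqref{eq:mod-pl} over eligible prompts and using $\phi(p_i(\theta))\ge c$ gives
\[
\frac{1}{|\mathcal{D}|}\sum_{i\in\mathcal D}\|\nabla\ell_i\|^2\ge c\mu\,(L^{\mathrm{C}}-L^\star).
\]
To turn this into a PL inequality for $\nabla L^{\mathrm C}$ itself, I would use the descent lemma in expectation over the sampled prompt. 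With step size $\eta$, $\beta$-smoothness and variance $\sigma^2$ give
\[
\mathbb{E}[L_{k+1}]-L^\star\le(1-\eta c\mu)\,(\mathbb{E}[L_k]-L^\star)+\frac{\eta^2\beta\sigma^2}{2}.
\]
This step is slightly delicate, because $\|\mathbb{E}\nabla\ell_i\|^2$ differs from $\mathbb{E}\|\nabla\ell_i\|^2$. I would absorb the discrepancy into $\sigma^2$ by writing $\mathbb{E}\|\nabla\ell_i\|^2=\|\nabla L\|^2+\mathrm{Var}$. Unrolling the recursion with the choice $\eta\asymp c\mu\epsilon/(\beta\sigma^2)$ (or a decaying step $\eta_k\asymp 1/(c\mu k)$) yields $\epsilon$-accuracy after $T^{\mathrm C}=\mathcal O(\beta\sigma^2/((c\mu)^2\epsilon))$ steps. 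That the eligible set grows with $c(t)$ is handled by noting that $c$ is a uniform floor at every step; newly admitted prompts only add terms that still satisfy the floor.

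\textbf{Lower bound (uniform).} Under uniform sampling the same summation gives only the effective constant $\bar\phi^{\mathrm U}\mu$. To match this with a lower bound, I would exhibit a worst-case instance within the assumption class: quadratic per-prompt losses whose curvature along the gradient direction is exactly $\phi(p_i)\mu$, with additive Gaussian gradient noise of variance $\sigma^2$. For such quadratics the recursion above holds with equality, and the standard information-theoretic/noisy-quadratic lower bound for SGD applies, giving $\Omega(\beta\sigma^2/((\bar\phi^{\mathrm U}\mu)^2\epsilon))$. The ratio of the two rates is then $(c/\bar\phi^{\mathrm U})^2$. The contrast claims follow immediately: $\phi\equiv1$ gives $\bar\phi^{\mathrm U}=c=1$, and there is no separation.

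\textbf{Main obstacle.} The hardest part is that lower bound. Assumption~\ref{as:quality} is only a lower bound on gradient norms, so a genuine $\Omega$ statement cannot follow from it for all losses. I would therefore phrase the lower bound as worst case over the assumption class, constructed as above, and treat $p_i(\theta)$ as frozen along the trajectory (or bounded by its initial value). This matches the paper's ``in the worst case'' qualifier.
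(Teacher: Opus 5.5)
\textbf{Upper bound.} Your upper-bound route is the paper's: descent lemma, averaged modulated PL, the floor $\phi\ge c$, then Karimi-type unrolling with $\eta\asymp c\mu\epsilon/(\beta\sigma^2)$. You correctly locate the delicate step, but your fix for it fails. When you step along $g_i$ with $i$ uniform on $\mathcal{D}$, the expected first-order decrease of $L^{\mathrm{C}}$ is exactly $\eta\|\nabla L^{\mathrm{C}}\|^2=\eta\bigl(\mathbb{E}_i\|\nabla\ell_i\|^2-V\bigr)$, where $V=\mathrm{Var}_i(\nabla\ell_i)$. For $\eta\le 1/\beta$ the recursion you can actually derive is
\[
\mathbb{E}[L_{k+1}]-L^\star\le\Bigl(1-\tfrac{\eta c\mu}{2}\Bigr)\bigl(L_k-L^\star\bigr)+\tfrac{\eta}{2}V+\tfrac{\beta\eta^2}{2}\bigl(V+\sigma^2\bigr).
\]
The prompt-sampling variance therefore enters at first order in $\eta$. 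It cannot be folded into the $\eta^2\beta\sigma^2/2$ term, and it leaves an error floor of order $V/(c\mu)$ that no step size removes. Interpolation does not rescue this. Smoothness only gives $V\le 2\beta(L-L^\star)$, and since $c\mu\le 2\beta$, the resulting factor $1-\tfrac{\eta}{2}(c\mu-2\beta)$ is not a contraction. What is missing is a PL-type inequality for $\nabla L^{\mathrm{C}}$ itself, for example Assumption~\ref{as:quality} postulated for the stage loss with modulation $\min_{i\in\mathcal{D}}\phi(p_i)$. Per-prompt PL does not pass to the average, even with a common minimizer. The paper's Step~1b does not supply this either: it averages the per-prompt descent inequality as if it were descent of $L_s$, and it invokes Jensen in the direction that does not help. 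Separately, the floor on eligible prompts must be assumed (the paper uses the stage-mastery Assumption~\ref{as:master}). Each admission of new prompts also changes the objective and can raise $L^{\mathrm{C}}-L^\star$, so the recursion has to be restarted stage by stage.

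\textbf{Lower bound.} Here you depart from the paper, and your instance does not give the claimed rate. Take the averaged objective $f(\theta)=\tfrac{\lambda}{2}\|\theta-\theta^\star\|^2$ with $\lambda=\bar\phi^{\textup{U}}\mu$ and additive noise of variance $\sigma^2$. SGD then satisfies exactly $\mathbb{E}f_{k+1}-f^\star=(1-\eta\lambda)^2(\mathbb{E}f_k-f^\star)+\tfrac{\lambda\eta^2\sigma^2}{2}$. The noise term carries $\lambda$, not $\beta$, so your recursion does not hold with equality. SGD with $\eta\asymp\epsilon/\sigma^2$ (or $\eta_k\asymp 1/(\lambda k)$) reaches $\epsilon$ in $\tilde{\mathcal{O}}(\sigma^2/(\lambda\epsilon))$ steps, matching the classical information-theoretic bound $\Omega(\sigma^2/(\lambda\epsilon))$. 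Adding stiff directions of curvature $\beta$ only adds a $\beta/\lambda$ term (up to logarithms), never the product $\beta\sigma^2/\lambda^2$. Your construction therefore certifies only $T^{\textup{U}}=\Omega(\sigma^2/(\bar\phi^{\textup{U}}\mu\epsilon))$, which is a factor $\beta/(\bar\phi^{\textup{U}}\mu)$ short. On the same family the curriculum needs only $\tilde{\mathcal{O}}(\sigma^2/(c\mu\epsilon))$ steps. So the separation you can actually certify is of order $c/\bar\phi^{\textup{U}}$, not its square; the factor $S^2$ arises only from comparing two Karimi-type upper bounds. The paper instead applies Chebyshev's sum inequality under an asserted negative correlation between $\phi(p_i)$ and $\ell_i-\ell_i^\star$, caps the effective PL constant at $\bar\phi^{\textup{U}}\mu$, and then invokes ``the same SGD convergence machinery''. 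That is not a valid lower-bound argument either, since capping a term inside a descent \emph{lower} bound says nothing about how much progress SGD actually makes. Your instinct that the $\Omega$ statement needs an explicit hard instance is the right one. But it must be an instance on which $\beta\sigma^2/\lambda^2$ is actually attained, and no quadratic is.
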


\begin{corollary}[Mutual reinforcement]\label{cor:joint}
The three SEAD components form a virtuous cycle, not a circular dependency. The resolution is \emph{staged bootstrapping}:
\begin{enumerate}[leftmargin=1.5em, itemsep=1pt]
    \item \textbf{Curriculum provides the base.} By restricting to prompts with $\phi(p_i) \geq c > 0$, the curriculum ensures coherent rollouts, which in turn make the entropy landscape reliable (low-entropy tokens in Zone~A genuinely agree between teacher and student).
    \item \textbf{Reliable entropy enables sparse selection.} Given a well-behaved entropy landscape, Theorem~\ref{thm:sparse-gradient} guarantees that skipping Zone~A introduces negligible error, bounded by $\mathcal{O}(s\,G\sqrt{\tau_{\max}})$.
    \item \textbf{Sparse selection + monotonic competence enable annealing.} With competence growing monotonically (Theorem~\ref{thm:curriculum}) and Zone composition shifting predictably, the cosine schedule $\alpha(t)$ correctly tracks the exploration$\to$refinement transition.
\end{enumerate}
As stages progress, the competence lower bound $c$ tightens, Zone~A entropy threshold $\tau$ decreases, and the sparse gradient bound improves: all three components strengthen jointly.
\end{corollary}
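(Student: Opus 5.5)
The corollary is qualitative, so the plan is to prove it as a chain of three implications. Each link is discharged by Theorem~\ref{thm:curriculum}, Theorem~\ref{thm:sparse-gradient}, or Assumption~\ref{as:quality}, followed by a monotonicity argument for the final claim that the components ``strengthen jointly.'' The chain has a definite order: curriculum $\Rightarrow$ reliable entropy $\Rightarrow$ sparse selection $\Rightarrow$ annealing. Because of this order there is no circular dependency. Each stage uses only hypotheses established at the previous stage or fixed in advance, namely the eligible set $\mathcal{D}(t_{\text{step}})$ and the schedule $c(\cdot)$.

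\textbf{Step 1 (curriculum base).} By construction of $\mathcal{D}(t_{\text{step}})$, every eligible prompt satisfies $d_i\le c(t_{\text{step}})$, so $p_i\ge 1-c(t_{\text{step}})$. Since $\phi$ is non-decreasing, this gives $\phi(p_i(\theta))\ge \phi(1-c(t_{\text{step}}))=:c>0$. Here I use the staleness discussion to treat the initial pass rates as lower bounds on current pass rates. Assumption~\ref{as:quality} then yields a uniform PL constant $c\mu$ on all eligible prompts.

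\textbf{Step 2 (curriculum to monotone competence).} Invoking Theorem~\ref{thm:curriculum}, the standard smooth-PL descent recursion
\begin{equation*}
\mathbb{E}[\ell(\theta_{k+1})-\ell^\star]\le(1-\eta c\mu)\,\mathbb{E}[\ell(\theta_k)-\ell^\star]+\tfrac{\eta^2\beta\sigma^2}{2}
\end{equation*}
gives expected monotone decrease of the loss until the noise floor. I will translate loss decrease into non-decreasing $p_i$. This requires an additional link assumption, that lower OPD loss does not lower pass rate, and I will state it explicitly. That link is the main obstacle, since nothing earlier in the paper ties $\ell_i$ to $p_i$. My first attempt would be to argue that on coherent rollouts, a lower KL to the teacher bounds the change in the probability of a correct trajectory via TV and Pinsker's inequality.

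\textbf{Step 3 (sparse selection and annealing).} With coherent rollouts, Zone~A tokens satisfy $H_{\mathrm{te}},H_\theta\le\tau$ with small TV, which is exactly the hypothesis of Theorem~\ref{thm:sparse-gradient}. That theorem gives the error bound $O(sG\sqrt{\tau}\log|\mathcal{V}|)$, and I will absorb $\log|\mathcal{V}|$ into the constant as written. For annealing, I will only claim that the Zone~B fraction is non-decreasing in expectation as competence grows. This follows because student entropy falls on positions where teacher entropy is low, moving those tokens into A/B. That trend is consistent with the monotone decreasing $\alpha(t_{\text{step}})$ of Eq.~\ref{eq:anneal}.

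\textbf{Step 4 (joint strengthening).} I will show monotonicity of each quantity in stage index. The bound $c$ is non-decreasing because $p_i$ grows and $\phi$ is monotone. The threshold $\tau_{\max}$ is non-increasing because student entropies on Zone~A shrink. The bound in Eq.~\ref{eq:sparse-bound} is increasing in $\tau$, so it improves as $\tau_{\max}$ falls. A short induction over stages then closes the cycle.
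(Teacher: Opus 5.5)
Your Step~1 fails, and the claim that the chain has no circular dependency rests on it. The gate $d_i\le c(t_{\text{step}})$ constrains only the \emph{initial} pass rate $p_i(\theta_0)$. To get $\phi(p_i(\theta))\ge\phi(1-c(t_{\text{step}}))$ at the current $\theta$, you need $p_i(\theta)\ge p_i(\theta_0)$. That is exactly the monotone competence you derive in Step~2 from Step~1; the staleness paragraph assumes monotone improvement rather than proving it. So your chain is circular.

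Even granting that step, the floor behaves the wrong way. The competence function $c(\cdot)$ is non-decreasing and reaches $1$, and the implementation presents every prompt in one pass. Hence $\phi(1-c(t_{\text{step}}))$ is non-increasing and eventually equals $\phi(0)=0$. It need not be positive earlier either, since a non-decreasing $\phi$ may vanish on an interval. This also contradicts your Step~4 claim that $c$ is non-decreasing.

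The paper never derives the floor from the gate. In Step~3 of the proof of Theorem~\ref{thm:curriculum}, the bound $\phi(p_i)\ge c$ on eligible prompts comes from Assumption~\ref{as:master} (stage mastery), which posits $p_i\ge\alpha_{\min}$ with $\phi(\alpha_{\min})\ge c$ stage by stage. That is what makes the staged bootstrapping non-circular: the floor is an input at each stage, not a consequence of the improvement it is supposed to cause. Your Step~1 should invoke that assumption.

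Your other links are weaker than you present them.
\begin{itemize}
\item \textbf{Monotone competence (Step~2).} Theorem~\ref{thm:curriculum} only gives a rate for the expected stage-average loss and never shows $p_i$ is non-decreasing. In the corollary, ``monotone competence'' is again Assumption~\ref{as:master}.
\item \textbf{The Pinsker route.} At best it gives a floor $p_i(\theta)\ge p_i^{\mathrm{te}}-\sqrt{\mathrm{KL}_{\mathrm{seq}}/2}$. That needs a high teacher pass rate and a loss that controls sequence-level KL. The SEAD loss drops Zone~A and puts $\alpha\lambda$-weighted FKL on Zone~C, so it does not control it. Also, an expected decrease of an average over $\mathcal{D}_s$ cannot give monotonicity for every prompt.
\item \textbf{Annealing mechanism (Step~3).} Your mechanism is backwards. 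Zone~C depends on teacher entropy alone, so falling student entropy at teacher-confident positions moves tokens from $\mathcal{B}$ to $\mathcal{A}$. That shrinks $\mathcal{B}$ and enlarges $\mathcal{C}$'s share of the active set. $\mathcal{B}$ can only grow at the expense of $\mathcal{C}$, i.e.\ through falling \emph{teacher} entropy along on-policy contexts. With fixed percentiles $(\rho_A,\rho_B,\rho_C)$ the zone fractions are constant anyway.
\item \textbf{Threshold $\tau_{\max}$ (Step~4).} It must bound teacher as well as student entropy on $\mathcal{A}$, so shrinking student entropy alone does not make it decrease.
\item \textbf{Coherence to agreement.} Both you and the paper simply assert that coherent rollouts give Assumption~\ref{ass:agreement}. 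Low entropy of both models gives each a dominant mode, not a common one.
\end{itemize}
These links should be stated as assumptions, as the paper effectively does, rather than claimed as derived.
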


\section{Evaluation}
\subsection{Experimental Setup}
\label{sec:setup}
\paragraph{Models.}
We evaluate our approach on two distinct model families to demonstrate generalizability. The first is \textbf{Nemotron} (Nano-8B student, Super-49B teacher)and the second is \textbf{OLMo} (7B-Instruct-SFT student, 32B-Instruct teacher), which allows us to rigorously test our method in the challenging small teacher-student capability gap regime.

\paragraph{Training.}
We trained our models using 4 nodes. Each node is equipped with 8 H100 80GB GPUs and 1TB memory. We utilize a rigorously deduplicated version of the DAPO-Math-17K dataset \citep{dapo2025} for all training phases. More training details could be found in Appendix~\ref{app:train_detail}.

\paragraph{Evaluation.}
Our evaluation suite comprehensively assesses mathematical reasoning capabilities across varying difficulty levels. We report greedy Pass@1 accuracy on standard reasoning benchmarks, including MATH-500, Minerva-Math, and OlympiadBench. To evaluate performance on highly complex, competition-level mathematics, we utilize the AMC~2023 and AIME~2024/2025 datasets, reporting the Pass@32 metric (the average pass rate over 32 independently sampled trajectories per problem).

\paragraph{Baselines.}
We benchmark our approach against several strong paradigms: Group Relative Policy Optimization (GRPO) \citep{shao2024deepseekmath}, Vanilla OPD utilizing full-token Reverse Kullback-Leibler (RKL) divergence~\citep{lu2025thinking_machines}, and On-Policy Self-Distillation (OPSD) \citep{opsd2026}.

\subsection{Main Results}

We present the results of OLMo model pair in Table~\ref{tab:main_olmo} and leave Nemotron results in Appendix~\ref{app:exp_nemotron}.

\begin{table}[t]
\caption{Results on OLMo model pair (OLMo-7B student, OLMo-32B teacher).}
\label{tab:main_olmo}
\centering
\small
\resizebox{\textwidth}{!}{
\begin{tabular}{lccccccc}
\toprule
\textbf{Method} & \textbf{MATH-500} & \textbf{Minerva} & \textbf{Olympiad} & \textbf{AMC23} & \textbf{AIME24} & \textbf{AIME25} & \textbf{Avg.} \\
\midrule
\multicolumn{8}{l}{\textit{Off-the-shelf Models}} \\
\midrule
OLMo-7B-Instruct (Student)  & 87.2 & 41.9 & 58.2 & 80.4 & 46.8 & 34.6 & 58.2 \\
OLMo-32B-Instruct (Teacher) & 94.0 & 62.9 & 68.1 & 97.7 & 69.2 & 60.6 & 75.4 \\
\midrule
\multicolumn{8}{l}{\textit{Baselines}} \\
\midrule
OPSD                         & 86.4 & 43.8 & 57.8 & 80.2 & 44.0 & 36.8 & 58.2 \\
GRPO                         & 84.8   & 43.0   & 58.8   & 81.0   & 45.0   & 35.5   & 58.0   \\
OPD (RKL $k$=1)              & 87.0 & 41.9 & 60.1 & 80.9 & 47.9 & 37.3 & 59.2 \\
\midrule
\multicolumn{8}{l}{\textbf{\textit{Proposed (SEAD components)}}} \\
\midrule 
KL Annealing only 
& 87.8 & 43.4 & 58.5 & 82.9 & 48.2 & 35.6 & 59.4 \\
Token Zones only             & 87.0 & 41.9 & 60.1 & 80.9 & 47.9 & 39.3 & 59.5 \\
Token Zones + KL Annealing & 91.0   & 43.0   & \textbf{62.5}   & 89.1   & \textbf{54.5}   & 42.1   & 63.7   \\
SEAD (full) & \textbf{91.2}   & \textbf{44.5}   & 62.1   & \textbf{89.8}   & 53.9   & \textbf{42.5}   & \textbf{64.0}   \\
\bottomrule
\end{tabular}
}
\end{table}

\paragraph{The Distillation Challenge.}
As shown in Table~\ref{tab:main_olmo}, bridging the massive 17.2-point performance gap between the OLMo teacher and student is challenging. While the teacher establishes a strong upper bound (75.4 average), standard off-policy distillation (OPD) baselines fail to meaningfully improve upon the base student. OPSD and GRPO stagnate near the student's 58.2 average, while Vanilla OPD (RKL $k=1$) yields a marginal $+1.0$ improvement.

\paragraph{Efficacy of Proposed Components.}
KL Annealing alone provides a modest gain (+0.2 avg), confirming that divergence
   scheduling without token-level structure has limited impact. Token Zones alone
  similarly yields marginal improvement (+0.3), as zone-based sparsity without a
  matching temporal schedule leaves the FKL/RKL balance untuned. The critical
  finding is their \emph{synergy}: combining Token Zones with KL Annealing
  produces a +4.5 jump to 63.7 average---a $20\times$ amplification over either
  component in isolation. This validates the symbiotic design: zone sparsity
  concentrates gradient on informative tokens, while annealing ensures the
  divergence applied to those tokens evolves appropriately. Adding the
  competence-gated curriculum (SEAD full) further improves to 64.0, with the
  easy-to-hard ordering providing the coherent rollouts that stabilize
  entropy-based zone assignment.

\paragraph{Combined Method Performance.}
Combining Token Zones with KL Annealing yields robust performance, achieving a 63.7 average (a $+5.5$ absolute increase over the base student), with pronounced gains on the most rigorous datasets, including $+7.7$ points on AIME24 and $+7.5$ points on AIME25. Ultimately, the full SEAD method further pushes the average to 64.0. These results indicate that stabilizing the token zone strategy with an annealed KL penalty effectively mitigates baseline stagnation, drastically elevating student capabilities and closing the teacher-student gap.

\subsection{Ablation Study}
  We conduct a $2^3$ factorial ablation on OLMo-3-7B/32B to isolate three components: \textbf{T}~(Token SEAD, 50/40/10 zone assignment), \textbf{A}~(KL    
  annealing from 0 to target), and \textbf{C}~(curriculum).                                                                                                     
  Table~\ref{tab:ablation} reveals three findings:                                                                                                         
  Curriculum (C) is the single strongest factor, yielding +4.20 avg.\ accuracy alone, while annealing (A) in isolation provides negligible gain (+0.22).
 T+A jumps to +4.52, a 20$\times$ amplification over A alone, demonstrating that entropy-adaptive zone sparsity 
  requires a matching schedule to be effective.
  The three-way combination T+A+C achieves the best overall result (64.00 avg., +5.2 AIME25), with each pairwise interaction contributing complementary    
  gains. T+A matches curriculum-only (63.70 vs.\ 63.38), and A+C (63.97) narrows the gap to the full system.
  This confirms that SEAD's token-level zone assignment, annealing, and curriculum address orthogonal bottlenecks: zone sparsity targets \emph{which}
  tokens receive gradient, annealing controls \emph{when} sparsity activates, and curriculum determines \emph{what order} examples are presented.

  \begin{table}[htbp]                                                                                                                                         
  \caption{$2^3$ ablation. T = Token SEAD, A = Annealing, C = Curriculum. The AIME$\Delta$ column highlights the supervision quality cascade: T alone hurts
   hard benchmarks; T+C recovers via reliable entropy signals. The full combination is super-additive.}                                                    
  \label{tab:ablation}                                                                                                                                   
  \centering                             
  \small
  \begin{tabular}{ccc|ccc}
  \toprule
  \textbf{T} & \textbf{A} & \textbf{C} & \textbf{Avg. Acc.} & \textbf{$\Delta$ vs. OPD} & \textbf{AIME25 $\Delta$} \\
  \midrule
             &            &            & 59.18 & +0.0 & +0.0 \\
  \checkmark &            &            & 59.50 & +0.32 & +2.0 \\
             & \checkmark &            & 59.40 & +0.22 & $-$1.7 \\
             &            & \checkmark & 63.38 & +4.20 & +4.6 \\
  \checkmark & \checkmark &            & 63.70 & +4.52 & +4.8 \\
  \checkmark &            & \checkmark & 63.42 & +4.24 & +3.4 \\
             & \checkmark & \checkmark & 63.97 & +4.79 & +4.5 \\
  \checkmark & \checkmark & \checkmark & 64.00 & +4.82 & +5.2 \\
  \bottomrule
  \end{tabular}
  \end{table}

\section{Analysis}
\label{sec:analysis}

\subsection{Token-Level Analysis}
\label{sec:token_analysis}


\begin{figure*}[h]
    \centering
    \begin{subfigure}[t]{0.48\textwidth}
        \centering
        \includegraphics[width=\textwidth]{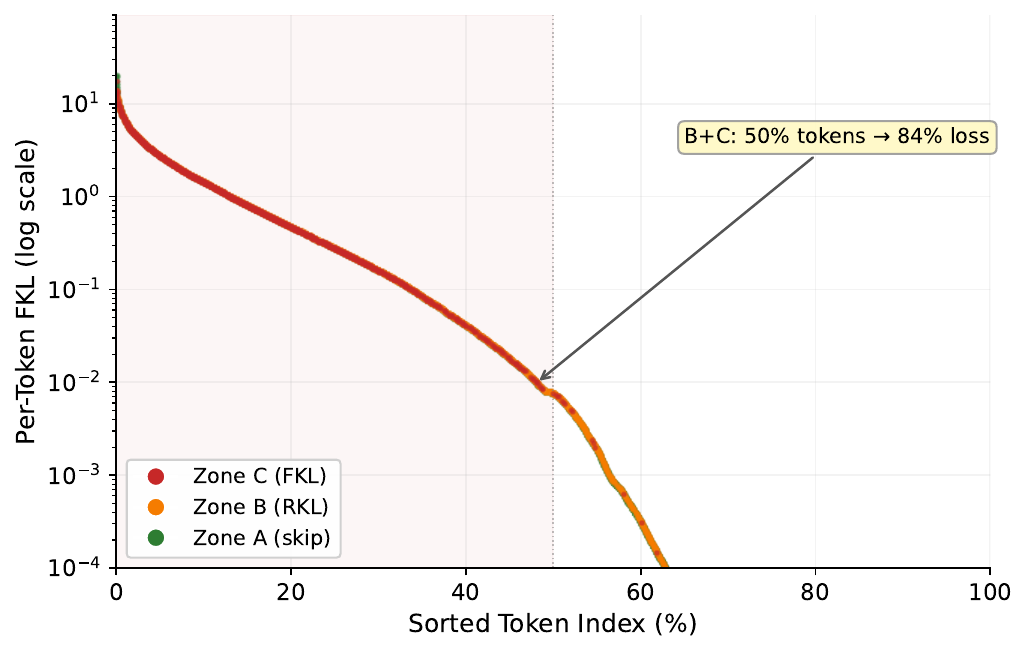}
        \caption{Sorted per-token FKL by zone. The top 50\% of tokens (Zones B+C) concentrate 90\% of total loss, while the remaining 50\% (Zone~A) carry only 10\%.}
        \label{fig:sead-panel-a}
    \end{subfigure}
    \hfill
    \begin{subfigure}[t]{0.48\textwidth}
        \centering
        \includegraphics[width=\textwidth]{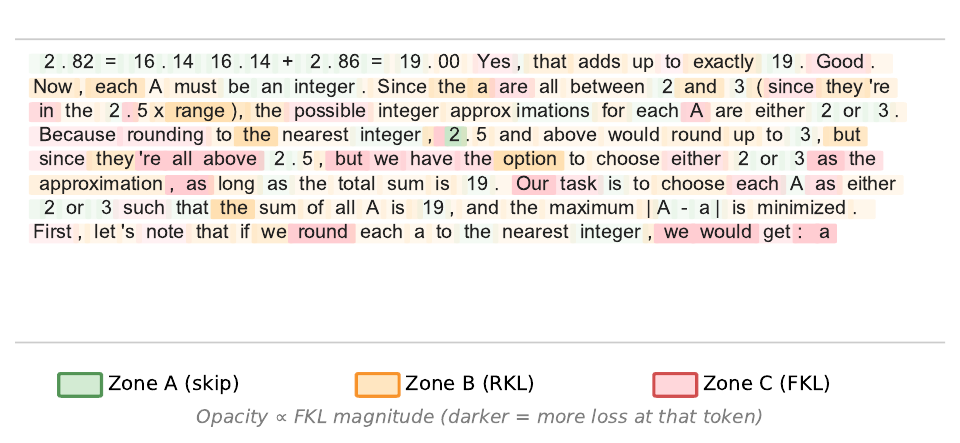}
        \caption{Token-level zone visualization on OLMo-3-7B output. Background color indicates zone assignment; opacity is proportional to FKL magnitude. Strategy forks (dark red) cluster at reasoning decision points.}
        \label{fig:sead-panel-b}
    \end{subfigure}

    \vspace{0.3cm}

    \begin{subfigure}[t]{0.48\textwidth}
        \centering
        \includegraphics[width=\textwidth]{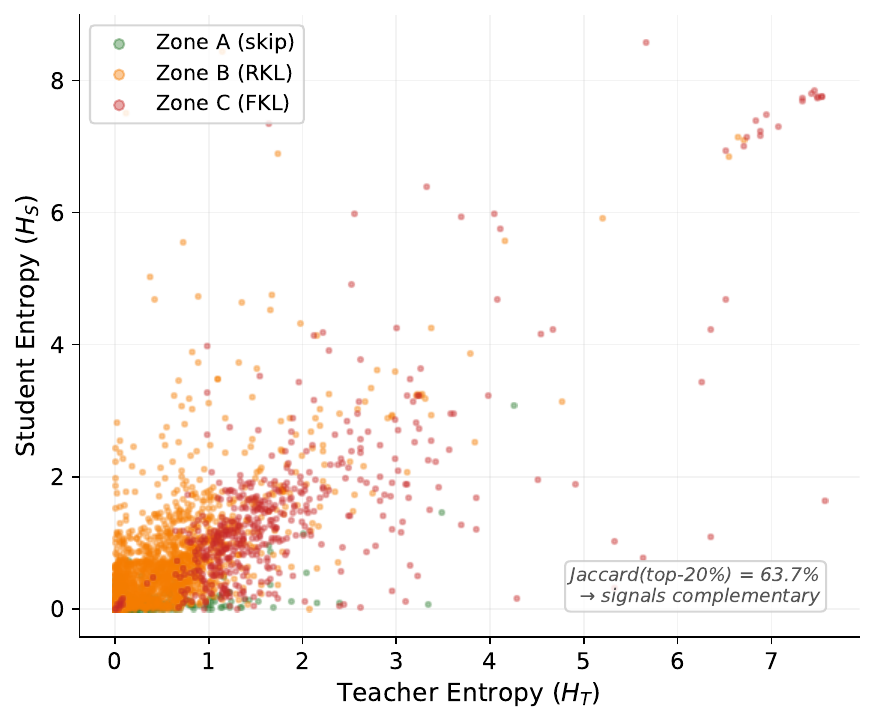}
        \caption{Zone assignment in joint entropy space ($H_T$ vs.\ $H_S$). Jaccard overlap of top-20\% sets is 63.7\%, confirming complementary signals from teacher and student entropy.}
        \label{fig:sead-panel-c}
    \end{subfigure}
    \hfill
    \begin{subfigure}[t]{0.48\textwidth}
        \centering
        \includegraphics[width=\textwidth]{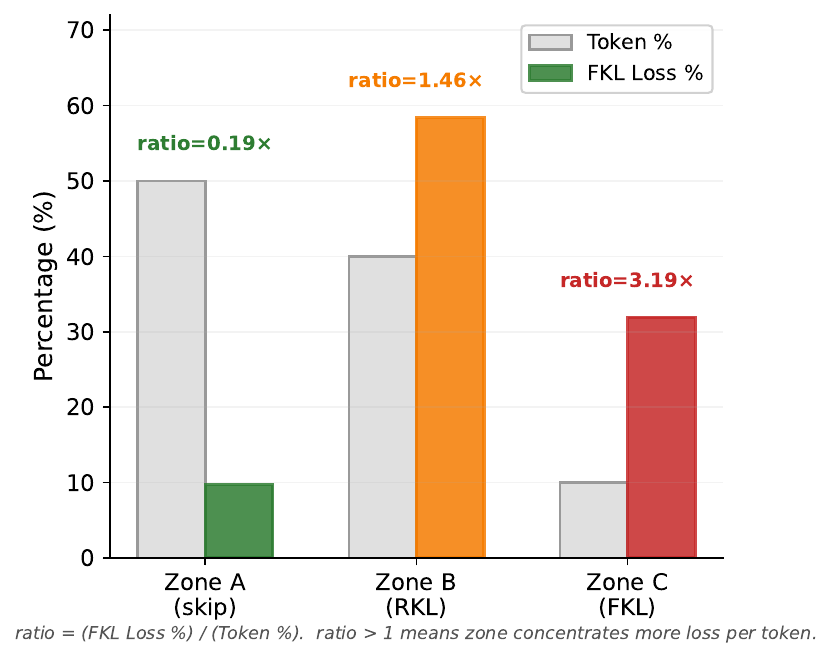}
        \caption{Loss concentration by zone. Zone~C (10\% of tokens) carries 31.9\% of FKL loss ($3.19\times$ concentration); Zone~B+C together achieve $1.81\times$ over uniform.}
        \label{fig:sead-panel-d}
    \end{subfigure}

    \caption{\textbf{SEAD zone analysis on OLMo-3-7B / OLMo-3.1-32B (50/40/10 config).} (a)~Sorted per-token FKL reveals a heavy-tailed loss distribution where 50\% of tokens dominate gradient signal. (b)~Zone coloring on actual model output shows Zone~C (red) tokens align with reasoning forks. (c)~Joint entropy space confirms teacher and student signals are partially complementary (Jaccard = 63.7\%). (d)~Loss concentration validates that Zone~A tokens are safe to skip ($0.19\times$ ratio) while Zone~C tokens concentrate $3.19\times$ more loss per token.}
    \label{fig:sead-analysis}
\end{figure*}

\paragraph{Entropy distribution and zone assignment.}
Zone~A tokens overwhelmingly correspond to deterministic computation steps and connectives, e.g., on OLMo-3 7B/32B (50/40/10 config), 73.2\% of computation tokens and 54.2\% of syntax tokens are assigned to Zone~A.
Zone~B tokens cluster at reasoning transitions where the student holds partial knowledge (17.2\% strategy forks, 52.7\% common tokens receiving moderate gradient), and RKL sharpens the student toward the teacher's peak mode.
Zone~C tokens appear at genuine reasoning forks---63.9\% are \emph{strategy fork} positions where both teacher and student entropy exceed the 75th percentile, representing critical decision points with multiple valid continuations where FKL's mode-covering property prevents catastrophic collapse.
Crucially, the 50/40/10 configuration routes \emph{zero} strategy fork tokens to Zone~A, ensuring every reasoning-critical position receives active supervision (Table~\ref{tab:zone-category-5040}).
Figure~\ref{fig:sead-panel-b} visualizes this on actual model output: Zone~C tokens (red) concentrate at branching points in the reasoning chain, while Zone~A tokens (green) cover predictable syntax and arithmetic with near-zero loss contribution.

\paragraph{Selection overlap and signal complementarity.}
With a Jaccard index of 63.7\% between the top-20\% teacher entropy and top-20\% student entropy positions, approximately 36\% of high-entropy tokens are unique to one signal (Figure~\ref{fig:sead-panel-c}).
Teacher entropy identifies positions where guidance is most valuable (multiple valid continuations), while student entropy identifies positions the student has already mastered.
This partial overlap justifies joint conditioning: neither signal alone captures the full picture.

\paragraph{Gradient concentration.}
Figure~\ref{fig:sead-panel-d} quantifies the concentration effect: Zone~A (50\% of tokens) carries only 9.7\% of total FKL loss, while Zone~B+C (50\% of tokens) concentrates 90.3\% of gradient signal.
The per-zone ratios are stark: Zone~C tokens carry $3.19\times$ their fair share of loss, Zone~B tokens carry $1.46\times$, and Zone~A tokens carry only $0.19\times$.
This means skipping Zone~A discards half the tokens while retaining over 90\% of the training signal, a highly efficient compute-quality tradeoff.
Figure~\ref{fig:sead-panel-a} shows this visually: the sorted FKL curve drops precipitously, with Zone~B and C tokens (orange, red) dominating the high-loss region while Zone~A tokens (green) cluster near zero.


\begin{table}[t]
\centering
\small
\setlength{\tabcolsep}{3.5pt}
\caption{Token categorization: P(zone $|$ category). The 50/40/10 config routes zero strategy forks to Zone~A, ensuring all reasoning-critical tokens receive supervision.}
\label{tab:zone-category-5040}
\begin{tabular}{@{}l c c c c@{}}
\toprule
Category &  Zone A & Zone B &  Zone C & Total \\
\midrule
Deterministic syntax & 54.2\% & 39.1\% & 6.7\% & 4{,}777 \\
Computation & 73.2\% & 21.8\% & 5.0\% & 2{,}564 \\
Reasoning transition & 10.8\% & 59.1\% & 30.0\% & 536 \\
Strategy fork & \textbf{0.0\%} & 52.7\% & \textbf{47.3\%} & 2{,}968 \\
Other & 57.4\% & 41.1\% & 1.6\% & 11{,}696 \\
\bottomrule
\end{tabular}
\end{table}
\subsection{Training Dynamics}
\label{sec:dynamics}

A known failure mode of vanilla OPD is premature entropy collapse, where the student model overly concentrates probability mass on the teacher's peak mode before exploring alternative reasoning paths. SEAD effectively mitigates this collapse. As illustrated in Appendix~\ref{app:train_dynamics}, vanilla OPD exhibits a clear decreasing trend in average token-level entropy during training, whereas SEAD maintains stable, consistently higher entropy levels. This preserves necessary distributional diversity while still sharpening the student's outputs. 

\vspace{-0.9em}
\section{Related Work}
\label{sec:related}

\paragraph{On-Policy Distillation.}
GKD \citep{agarwal2024gkd} established on-policy distillation for LMs. Extensions include RKL for reasoning \citep{lu2025thinking_machines}, self-distillation (OPSD, \citealp{opsd2026}), and theoretical unification with dense KL-constrained RL (G-OPD, \citealp{gopd2026}). Recent systems work focuses on scaling \citep{reopold2026, lightning_opd2026}.
\vspace{-0.9em}

\paragraph{Token-Level Selection.}
EOPD \citep{eopd2026} switches FKL/RKL based on teacher entropy alone. TIP \citep{tip2026} uses student entropy $\times$ divergence for weighting with a single divergence. SE-KD \citep{sekd2026} and SCOPE \citep{scope2026} explore other selection criteria. SEAD differs via (i)~\emph{joint} teacher-student partitioning, and (ii)~per-zone divergence switching.
\vspace{-0.9em}

\paragraph{Divergence Scheduling.}
Reopold \citep{reopold2026} uses a discrete two-stage schedule; PACED \citep{paced2026} finds forward-then-reverse optimal; DRKL \citep{drkl2026} addresses entropy collapse. Our annealing generalizes these to continuous transitions motivated by evolving zone composition.
\vspace{-0.5em}
\paragraph{Curriculum for Reasoning.}
Curriculum has been studied for RL-based reasoning~\citep{parashar2025curriculum, wen2025light}. However, \textbf{No prior work applies curriculum to OPD}, despite it being flagged as an open problem \citep{opsd2026, rethinking_opd2026}. Our contribution is showing \emph{why} it is uniquely necessary: the modulated PL condition (Assumption~\ref{as:quality}) formalizes supervision quality degradation specific to OPD.

\section{Conclusion and Limitations}
\label{sec:conclusion}
\vspace{-0.9em}
In conclusion, SEAD addresses the dependence of teacher supervision on student competence in on-policy distillation by utilizing entropy as a unified signal. It establishes a mutually reinforcing framework operating across three levels: applying adaptive divergences to informative tokens, temporally annealing from exploration to refinement, and employing a competence-gated prompt curriculum. However, this work has several limitations. First, the curriculum relies on static difficulty scores rather than computationally expensive adaptive re-estimation. Second, our method may apply to other domains (e.g., code) for future work. Finally, SEAD's efficacy on very long reasoning chains requires further investigation.



\bibliographystyle{plainnat}
\bibliography{references}

@inproceedings{agarwal2024gkd,
  title={On-Policy Distillation of Language Models: Learning from Self-Generated Mistakes},
  author={Agarwal, Rishabh and Vieillard, Nino and Zhou, Yongchao and Stanczyk, Piotr and Ramos, Sabela and Geist, Matthieu and Bachem, Olivier},
  booktitle={International Conference on Learning Representations},
  year={2024},
  url={https://openreview.net/forum?id=3zKtaqxLhW}
}

@article{lu2025thinking_machines,
  title={On-Policy Distillation},
  author={Lu, Kevin and {Thinking Machines Lab}},
  journal={Thinking Machines Lab: Connectionism},
  year={2025},
  doi={10.64434/tml.20251026},
  url={https://thinkingmachines.ai/blog/on-policy-distillation}
}

@article{reopold2026,
  title={Scaling Reasoning Efficiently via Relaxed On-Policy Distillation},
  author={Ko, Jongwoo and Abdali, Sara and Kim, Young Jin and Chen, Tianyi and Cameron, Pashmina},
  journal={arXiv preprint arXiv:2603.11137},
  year={2026}
}

@article{eopd2026,
  title={Entropy-Aware On-Policy Distillation of Language Models},
  author={Jin, Woogyeol and Min, Taywon and Yang, Yongjin and Kadhe, Swanand Ravindra and Zhou, Yi and Wei, Dennis and Baracaldo, Nathalie and Lee, Kimin},
  journal={arXiv preprint arXiv:2603.07079},
  year={2026}
}

@article{gopd2026,
  title={Learning beyond Teacher: Generalized On-Policy Distillation with Reward Extrapolation},
  author={Yang, Wenkai and Liu, Weijie and Xie, Ruobing and Yang, Kai and Yang, Saiyong and Lin, Yankai},
  journal={arXiv preprint arXiv:2602.12125},
  year={2026}
}

@article{opsd2026,
  title={Self-Distilled Reasoner: On-Policy Self-Distillation for Large Language Models},
  author={Zhao, Siyan and Xie, Zhihui and Liu, Mengchen and Huang, Jing and Pang, Guan and Chen, Feiyu and Grover, Aditya},
  journal={arXiv preprint arXiv:2601.18734},
  year={2026}
}

@article{opd_survey_2026,
  title={A Survey of On-Policy Distillation for Large Language Models},
  author={Song, Mingyang and Zheng, Mao},
  journal={arXiv preprint arXiv:2604.00626},
  year={2026}
}

@article{lightning_opd2026,
  title={Lightning {OPD}: Efficient Post-Training for Large Reasoning Models with Offline On-Policy Distillation},
  author={Wu, Yecheng and others},
  journal={arXiv preprint arXiv:2604.13010},
  year={2026}
}

@article{tip2026,
  title={{TIP}: Token Importance in On-Policy Distillation},
  author={Xu, Yuanda and Sang, Hejian and Zhou, Zhengze and He, Ran and Wang, Zhipeng and Geramifard, Alborz},
  journal={arXiv preprint arXiv:2604.14084},
  year={2026}
}

@article{sekd2026,
  title={Rethinking Selective Knowledge Distillation},
  author={Tavor, Almog and Ebenspanger, Itay and Cnaan, Neil and Geva, Mor},
  journal={arXiv preprint arXiv:2602.01395},
  year={2026}
}

@article{scope2026,
  title={{SCOPE}: Signal-Calibrated On-Policy Distillation Enhancement with Dual-Path Adaptive Weighting},
  author={Zheng, Binbin and Ma, Xing and Liang, Yiheng and Ruan, Jingqing and Fu, Xiaoliang and Lin, Kepeng and Zhu, Benchang and Zeng, Ke and Cai, Xunliang},
  journal={arXiv preprint arXiv:2604.10688},
  year={2026}
}

@article{paced2026,
  title={{PACED}: Distillation at the Frontier of Student Competence},
  author={Xu, Yuanda and Sang, Hejian and Zhou, Zhengze and He, Ran and Wang, Zhipeng},
  journal={arXiv preprint arXiv:2603.11178},
  year={2026}
}

@article{drkl2026,
  title={Diversity-Aware Reverse {Kullback-Leibler} Divergence for Large Language Model Distillation},
  author={Luong, Hoang-Chau and others},
  journal={arXiv preprint arXiv:2604.00223},
  year={2026}
}

@article{rethinking_opd2026,
  title={Rethinking On-Policy Distillation of Large Language Models: Phenomenology, Mechanism, and Recipe},
  author={Li, Yaxuan and Zuo, Yuxin and He, Bingxiang and Zhang, Jinqian and Xiao, Chaojun and Qian, Cheng and Yu, Tianyu and Gao, Huan-ang and Yang, Wenkai and Liu, Zhiyuan and Ding, Ning},
  journal={arXiv preprint arXiv:2604.13016},
  year={2026}
}

@inproceedings{karimi2016linear,
  title={Linear convergence of gradient and proximal-gradient methods under the polyak-{\l}ojasiewicz condition},
  author={Karimi, Hamed and Nutini, Julie and Schmidt, Mark},
  booktitle={Joint European conference on machine learning and knowledge discovery in databases},
  pages={795--811},
  year={2016},
  organization={Springer}
}

@inproceedings{platanios2019competence,
  title={Competence-based curriculum learning for neural machine translation},
  author={Platanios, Emmanouil Antonios and Stretcu, Otilia and Neubig, Graham and Poczos, Barnabas and Mitchell, Tom},
  booktitle={Proceedings of the 2019 conference of the North American chapter of the association for computational linguistics: human language technologies, volume 1 (long and short papers)},
  pages={1162--1172},
  year={2019}
}

@article{parashar2025curriculum,
  title={Curriculum reinforcement learning from easy to hard tasks improves LLM reasoning},
  author={Parashar, Shubham and Gui, Shurui and Li, Xiner and Ling, Hongyi and Vemuri, Sushil and Olson, Blake and Li, Eric and Zhang, Yu and Caverlee, James and Kalathil, Dileep and others},
  journal={arXiv preprint arXiv:2506.06632},
  year={2025}
}

@inproceedings{wen2025light,
  title={Light-r1: Curriculum sft, dpo and rl for long cot from scratch and beyond},
  author={Wen, Liang and Cai, Yunke and Xiao, Fenrui and He, Xin and An, Qi and Duan, Zhenyu and Du, Yimin and Liu, Junchen and Tanglifu, Tanglifu and Lv, Xiaowei and others},
  booktitle={Proceedings of the 63rd Annual Meeting of the Association for Computational Linguistics (Volume 6: Industry Track)},
  pages={318--327},
  year={2025}
}

@article{dapo2025,
  title={{DAPO}: An Open-Source {LLM} Reinforcement Learning System at Scale},
  author={Yu, Qiying and Zhang, Zheng and Zhu, Ruofei and Yuan, Yufeng and Zuo, Xiaochen and Yue, Yu and Dai, Weinan and Fan, Tiantian and Liu, Gaohong and Liu, Lingjun and others},
  journal={arXiv preprint arXiv:2503.14476},
  year={2025}
}

@article{shao2024deepseekmath,
  title={Deepseekmath: Pushing the limits of mathematical reasoning in open language models},
  author={Shao, Zhihong and Wang, Peiyi and Zhu, Qihao and Xu, Runxin and Song, Junxiao and Bi, Xiao and Zhang, Haowei and Zhang, Mingchuan and Li, YK and Wu, Yang and others},
  journal={arXiv preprint arXiv:2402.03300},
  year={2024}
}

\appendix
\section{Theoretical Proofs}
\label{app:proofs}

\subsection{Full Assumptions for Theorem~\ref{thm:sparse-gradient}}

\begin{assumption}[Bounded score functions]\label{ass:bounded-score}
There exists a constant $G > 0$ such that for every token position $t$ and vocabulary element $v \in \mathcal{V}$, $\|\nabla_{\theta}\log\pi_{\theta}(v \mid \mathbf{c}_t)\| \leq G$.
\end{assumption}

\begin{assumption}[Low-entropy agreement in Zone~A]\label{ass:agreement}
For every token position $t \in \mathcal{A}$, the teacher and student distributions satisfy $H_{\mathrm{te}}(t) \leq \tau$ and $H_{\theta}(t) \leq \tau$, and $\mathrm{TV}(\pi_{\mathrm{te}}(\cdot \mid \mathbf{c}_t), \pi_{\theta}(\cdot \mid \mathbf{c}_t)) \leq \delta(\tau)$, where $\delta(\tau) \to 0$ as $\tau \to 0$. A sufficient condition is that both distributions place mass $\geq 1 - \sqrt{\tau/2}$ on a common mode $v^{\star}_t$.
\end{assumption}

\begin{assumption}[Bounded log-density ratio]\label{ass:bounded-ratio}
For all $v \in \mathcal{V}$ and token positions $t$, $|\log\pi_\theta(v \mid \mathbf{c}_t) - \log\pi_{\mathrm{te}}(v \mid \mathbf{c}_t)| \leq B_{\max}$. In Zone~A, the low-entropy agreement (Assumption~\ref{ass:agreement}) guarantees that for the dominant mode $v^\star_t$: $|\log\pi_\theta(v^\star_t) - \log\pi_{\mathrm{te}}(v^\star_t)| \leq \delta(\tau)/(1-\sqrt{\tau/2})$, while off-mode tokens ($v \neq v^\star_t$) satisfy $|\log\pi_\theta(v) - \log\pi_{\mathrm{te}}(v)| \leq \log|\mathcal{V}|$ trivially since all probabilities are at least $1/|\mathcal{V}|$ under temperature-bounded sampling.
\end{assumption}

\subsection{Proof of Theorem~\ref{thm:sparse-gradient}}
\label{app:proof-sparse}

\begin{proof}
Write the full gradient as a convex decomposition over zones:
\[
  g = \frac{|\mathcal{A}|}{N}\,\bar{g}_{\mathcal{A}} + \frac{|\mathcal{B}\cup\mathcal{C}|}{N}\,\hat{g},
\]
where $\bar{g}_{\mathcal{A}} = \frac{1}{|\mathcal{A}|}\sum_{t\in\mathcal{A}}\nabla_{\theta}\mathcal{L}(t)$. Since $|\mathcal{B}\cup\mathcal{C}|/N = 1-s$:
\[
  g - (1-s)\,\hat{g} = s\,\bar{g}_{\mathcal{A}}.
\]
It suffices to bound $\|\nabla_\theta\mathcal{L}(t)\|$ for each $t\in\mathcal{A}$.

\paragraph{Forward KL gradient at a Zone~A token.}
For forward KL, $\nabla_{\theta}\,L_{\mathrm{FKL}}(t) = -\sum_{v\in\mathcal{V}} \pi_{\mathrm{te}}(v\mid \mathbf{c}_t)\,\nabla_{\theta}\log\pi_{\theta}(v\mid \mathbf{c}_t)$.
Using the score-function identity $\sum_v \pi_{\theta}(v\mid \mathbf{c}_t)\,\nabla_{\theta}\log\pi_{\theta}(v\mid \mathbf{c}_t) = \mathbf{0}$:
\begin{align}
  \nabla_{\theta}\,L_{\mathrm{FKL}}(t)
  &= -\sum_{v} \bigl[\pi_{\mathrm{te}}(v\mid \mathbf{c}_t) - \pi_{\theta}(v\mid \mathbf{c}_t)\bigr]\,\nabla_{\theta}\log\pi_{\theta}(v\mid \mathbf{c}_t). \label{eq:fkl-expanded}
\end{align}
By Assumption~\ref{ass:bounded-score}, $\|\nabla_{\theta}\,L_{\mathrm{FKL}}(t)\| \leq 2\,G\,\mathrm{TV}(\pi_{\mathrm{te}},\pi_{\theta}) \leq 2\,G\,\delta(\tau)$.

When $H_{\mathrm{te}}(t) \leq \tau$, the teacher places mass $\geq 1-\sqrt{\tau/2}$ on a single token $v^{\star}$ (via the binary entropy inequality $h(p) = -p\log p - (1{-}p)\log(1{-}p) \geq 2p^2$ for small $p$, applied to the off-mode mass). Similarly for the student. Then from~\eqref{eq:fkl-expanded}:
$\|\nabla_{\theta}\,L_{\mathrm{FKL}}(t)\| \leq G(\sqrt{2\tau} + 2\,\delta(\tau))$.

\paragraph{Reverse KL gradient at a Zone~A token.}
The gradient of $L_{\mathrm{RKL}}(t) = \mathrm{KL}(\pi_\theta(\cdot\mid\mathbf{c}_t) \| \pi_{\mathrm{te}}(\cdot\mid\mathbf{c}_t))$ is:
\begin{align}
  \nabla_\theta L_{\mathrm{RKL}}(t)
  &= \sum_{v\in\mathcal{V}} \nabla_\theta\!\left[\pi_\theta(v\mid\mathbf{c}_t)\bigl(\log\pi_\theta(v\mid\mathbf{c}_t) - \log\pi_{\mathrm{te}}(v\mid\mathbf{c}_t)\bigr)\right] \nonumber\\
  &= \sum_{v\in\mathcal{V}} \pi_\theta(v\mid\mathbf{c}_t)\bigl[\log\pi_\theta(v\mid\mathbf{c}_t) - \log\pi_{\mathrm{te}}(v\mid\mathbf{c}_t) + 1\bigr]\,\nabla_\theta\log\pi_\theta(v\mid\mathbf{c}_t) \nonumber\\
  &= \sum_{v\in\mathcal{V}} \pi_\theta(v\mid\mathbf{c}_t)\bigl[\log\pi_\theta(v\mid\mathbf{c}_t) - \log\pi_{\mathrm{te}}(v\mid\mathbf{c}_t)\bigr]\,\nabla_\theta\log\pi_\theta(v\mid\mathbf{c}_t),
  \label{eq:rkl-grad-closed}
\end{align}
where the last equality uses the score function identity $\sum_v \pi_\theta(v)\nabla_\theta\log\pi_\theta(v) = \mathbf{0}$.

In Zone~A, both distributions concentrate mass $\geq 1-\sqrt{\tau/2}$ on a common mode $v^\star$ (Assumption~\ref{ass:agreement}). Splitting the sum into on-mode ($v = v^\star$) and off-mode ($v \neq v^\star$) contributions:
\begin{itemize}[leftmargin=1.5em]
  \item \textbf{On-mode:} $\pi_\theta(v^\star) \geq 1-\sqrt{\tau/2}$ and $|\log\pi_\theta(v^\star) - \log\pi_{\mathrm{te}}(v^\star)| \leq \delta(\tau)/(1-\sqrt{\tau/2})$.
  \item \textbf{Off-mode:} Total mass $\leq \sqrt{\tau/2}$, and each log-ratio is bounded by $\log|\mathcal{V}|$ (Assumption~\ref{ass:bounded-ratio}).
\end{itemize}
By Assumption~\ref{ass:bounded-score}:
\begin{align}
  \|\nabla_\theta L_{\mathrm{RKL}}(t)\|
  &\leq G\left[(1-\sqrt{\tau/2})\cdot\frac{\delta(\tau)}{1-\sqrt{\tau/2}} + \sqrt{\tau/2}\cdot\log|\mathcal{V}|\right] \nonumber\\
  &= G\bigl(\delta(\tau) + \sqrt{\tau/2}\,\log|\mathcal{V}|\bigr) = \mathcal{O}\!\bigl(G\sqrt{\tau}\,\log|\mathcal{V}|\bigr).
  \label{eq:rkl-zone-a-bound}
\end{align}

\paragraph{Aggregation.}
The FKL bound gives $G(\sqrt{2\tau} + 2\,\delta(\tau))$ and the RKL bound gives $G(\delta(\tau) + \sqrt{\tau/2}\,\log|\mathcal{V}|)$. Taking the maximum:
$\|\bar{g}_{\mathcal{A}}\| \leq G(\sqrt{2\tau} + 2\,\delta(\tau) + \sqrt{\tau/2}\,\log|\mathcal{V}|)$.
Therefore $\|g - (1-s)\hat{g}\| = s\|\bar{g}_{\mathcal{A}}\| \leq s\,G(\sqrt{2\tau} + 2\,\delta(\tau) + \sqrt{\tau/2}\,\log|\mathcal{V}|) = O(s\,G\sqrt{\tau}\,\log|\mathcal{V}|)$.
In practice $\log|\mathcal{V}| = O(\log 10^5) \approx 11.5$ is a moderate constant, so the bound simplifies to $O(s\,G\sqrt{\tau})$ up to logarithmic factors.
\end{proof}

\subsection{Full Assumptions and Proof of Theorem~\ref{thm:curriculum}}
\label{app:proof-curriculum}

The analysis requires four assumptions. Assumptions~\ref{as:smooth} and~\ref{as:var} are standard; the novelty is Assumption~\ref{as:quality}.

\begin{assumption}[Smoothness]\label{as:smooth}
Each $\ell_i$ is $\beta$-smooth: $\|\nabla \ell_i(\theta) - \nabla \ell_i(\theta')\| \leq \beta\|\theta - \theta'\|$.
\end{assumption}

\begin{assumption}[Bounded stochastic gradients]\label{as:var}
$\mathbb{E}[g_i(\theta)] = \nabla \ell_i(\theta)$ and $\mathbb{E}\|g_i(\theta) - \nabla \ell_i(\theta)\|^2 \leq \sigma^2$.
\end{assumption}

\begin{assumption}[Stage mastery]\label{as:master}
For each stage $s$, after $T_s$ SGD steps on $\mathcal{Q}_{\leq s}$, the student achieves $p_i(\theta) \geq \alpha_{\min}$ for all $q_i \in \mathcal{Q}_{\leq s}$, where $\phi(\alpha_{\min}) \geq c > 0$.
\end{assumption}

\begin{proof}[Proof of Theorem~\ref{thm:curriculum}]

\textbf{Step 1: Per-step descent.}
By $\beta$-smoothness and $\eta \leq 1/\beta$:
\begin{equation}
  \mathbb{E}_g[\ell_i(\theta_t) - \ell_i(\theta_{t+1})] \geq \frac{\eta}{2}\|\nabla \ell_i(\theta_t)\|^2 - \frac{\eta^2\beta\sigma^2}{2}.
  \label{eq:app-descent}
\end{equation}

\textbf{Step 1b: Lifting to stage-level loss.}
At each step, the SGD algorithm samples a prompt $i$ uniformly from the eligible set $\mathcal{D}_s$ at stage $s$. Define the stage-level average loss $L_s(\theta) = \frac{1}{|\mathcal{D}_s|}\sum_{i \in \mathcal{D}_s} \ell_i(\theta)$. Taking expectation over $i \sim \mathrm{Uniform}(\mathcal{D}_s)$ in~\eqref{eq:app-descent}:
\begin{equation}
  \mathbb{E}_{i,g}[L_s(\theta_t) - L_s(\theta_{t+1})] \geq \frac{\eta}{2}\,\mathbb{E}_i\!\left[\|\nabla \ell_i(\theta_t)\|^2\right] - \frac{\eta^2\beta\sigma^2}{2}.
  \label{eq:app-descent-global}
\end{equation}
Crucially, $\mathbb{E}_i[\|\nabla \ell_i\|^2] \geq \|\mathbb{E}_i[\nabla \ell_i]\|^2 = \|\nabla L_s(\theta_t)\|^2$ by Jensen, so the bound on the \emph{average} of squared norms is at least as strong as what a global PL condition on $L_s$ would require. We proceed using $\mathbb{E}_i[\|\nabla \ell_i\|^2]$ directly.

\textbf{Step 2: Modulated PL.}
By Assumption~\ref{as:quality}: $\|\nabla \ell_i(\theta_t)\|^2 \geq \phi(p_i(\theta_t))\,\mu\,(\ell_i(\theta_t) - \ell_i^\star)$.
Averaging over $i \sim \mathrm{Uniform}(\mathcal{D}_s)$ and substituting into~\eqref{eq:app-descent-global}:
\begin{equation}
  \mathbb{E}_{i,g}[L_s(\theta_t) - L_s(\theta_{t+1})] \geq \frac{\eta\mu}{2}\,\mathbb{E}_i\!\left[\phi(p_i(\theta_t))\,(\ell_i(\theta_t) - \ell_i^\star)\right] - \frac{\eta^2\beta\sigma^2}{2}.
  \label{eq:app-descent-quality}
\end{equation}

\textbf{Step 3: Curriculum analysis.}
Under curriculum, $\phi(p_i(\theta)) \geq c$ on all eligible prompts (Assumption~\ref{as:master}). From Step 2, using $\phi \geq c$ uniformly:
$\mathbb{E}_i[\phi(p_i)(\ell_i - \ell_i^\star)] \geq c\,(L_s(\theta_t) - L_s^\star)$.
This yields a standard PL descent with effective constant $\mu_{\mathrm{eff}} = c\mu$. Applying the SGD convergence rate under PL~\citep{karimi2016linear} with optimal step size $\eta = c\mu\epsilon/(\beta\sigma^2)$:
\[
  T^{\textup{C}} = O\!\left(\frac{\beta\sigma^2}{(c\,\mu)^2\,\epsilon} + \frac{1}{c\,\mu}\log\frac{1}{\epsilon}\right).
\]
The first term dominates in the stochastic regime ($\sigma^2 > 0$).

\textbf{Step 4: Uniform analysis.}
Under uniform sampling over all $n$ prompts, define $\bar\phi^{\,\textup{U}} = \frac{1}{n}\sum_{i=1}^n \phi(p_i(\theta))$. The expected descent from Step 2 becomes:
\[
  \mathbb{E}_i[\phi(p_i)\,(\ell_i - \ell_i^\star)].
\]
Since competence $\phi(p_i)$ and suboptimality $(\ell_i - \ell_i^\star)$ are negatively correlated (easy prompts have high $\phi$ but low suboptimality), by Chebyshev's sum inequality:
\[
  \mathbb{E}_i[\phi(p_i)\,(\ell_i - \ell_i^\star)] \leq \bar\phi^{\,\textup{U}} \cdot (L(\theta) - L^\star).
\]
Therefore the effective PL constant under uniform sampling is \emph{at most} $\bar\phi^{\,\textup{U}}\mu$ (the true constant is even smaller due to the negative correlation, making uniform sampling \emph{strictly worse} than this upper bound suggests). Applying the same SGD convergence machinery:
\[
  T^{\textup{U}} \geq \Omega\!\left(\frac{\beta\sigma^2}{(\bar\phi^{\,\textup{U}}\,\mu)^2\,\epsilon} + \frac{1}{\bar\phi^{\,\textup{U}}\,\mu}\log\frac{1}{\epsilon}\right).
\]
Because hard prompts contribute $\phi \approx 0$ early in training, $\bar\phi^{\,\textup{U}} < c$.

\textbf{Step 5: Speed-up.}
In the stochastic regime, the ratio of convergence times is:
$T^{\textup{U}}/T^{\textup{C}} \geq \Omega\!\left((c/\bar\phi^{\textup{U}})^2\right)$.
Worst case: the student is initially competent only on stage~1 ($1/S$ fraction of prompts), so $\bar\phi^{\textup{U}} \leq c/S$, giving speedup $\geq \Omega(S^2)$. Even in the log-dominated (deterministic) regime, $T^{\textup{U}}/T^{\textup{C}} = \Theta(c/\bar\phi^{\textup{U}}) = \Theta(S)$.

\textbf{Step 6: OPD specificity.}
~
\begin{itemize}[leftmargin=1.5em]
  \item \textbf{Off-policy KD:} $\phi \equiv 1$ (teacher traces always coherent). No curriculum advantage.
  \item \textbf{RL:} Binary reward gives unbiased (if high-variance) gradients regardless of competence. OPD is different: teacher logits on incoherent rollouts are \emph{biased}, not just noisy. The $\phi$-modulation captures gradient \emph{norm} degradation, not just variance.
\end{itemize}
\end{proof}

\section{Additional Experiment Setup}
\subsection{Licenses for Existing Assets}
\label{app:asset-licenses}

We use only existing models, datasets, benchmarks, and software frameworks whose terms permit research use. Table~\ref{tab:asset_licenses} summarizes the main assets used in this work. We use these assets for training, evaluation, and comparison only, and we do not redistribute third-party model weights, benchmark data, or dataset contents unless their licenses explicitly permit redistribution. When releasing code, we will include links to the original asset sources, preserve copyright and license notices, and specify the exact versions used for all reproducibility-critical dependencies.

\begin{table}[h]
\centering
\small
\resizebox{\textwidth}{!}{
\begin{tabular}{lll}
\toprule
\textbf{Asset} & \textbf{Use in this paper} & \textbf{License / terms} \\
\midrule
OLMo model family & Student/teacher models & Apache-2.0, per official release terms \\
NVIDIA Nemotron model family & Student/teacher models & NVIDIA Open Model License / applicable Llama terms \\
DAPO-Math-17K & Training data & Apache-2.0, per dataset release \\
MATH-500 & Evaluation benchmark & Original benchmark/provider terms \\
Minerva-Math & Evaluation benchmark & Original benchmark/provider terms \\
OlympiadBench & Evaluation benchmark & MIT License, per official repository \\
AMC/AIME benchmarks & Evaluation benchmarks & Original competition/provider terms \\
\texttt{verl} & Training framework & Apache-2.0 \\
\bottomrule
\end{tabular}
}
\caption{Existing assets used in the paper. For assets with provider-specific or competition-specific terms, we use them only for evaluation and do not redistribute their contents.}
\label{tab:asset_licenses}
\end{table}

All third-party assets are used consistently with their intended research or evaluation purposes. For benchmarks derived from mathematical competitions or external providers, we report aggregate evaluation results only and do not claim ownership over the underlying problems. The proposed method itself does not require any proprietary dataset or private user data.

\subsection{Training Details.}
\label{app:train_detail}
During each iteration, we sample $B = 96$ prompts and generate one rollout per    
  prompt using a
  sampling temperature of 0.7 and a maximum generation length of 16{,}384 tokens.   
  Following teacher                                                                 
  scoring, we perform $K = 1$ gradient update with a clipping parameter $\epsilon =
  0.2$. We use a peak                    
  learning rate of $5 \times 10^{-9}$ with a linear warmup and decay schedule over a
   single training epoch.
  For the SEAD configuration, the regularization weights are set to $\rho_A = 50$,
  $\rho_B = 40$, and $\rho_C = 10$.
  The annealing factor follows a cosine schedule, decaying smoothly from
  $\alpha_{\text{start}} = 0.8$ to $\alpha_{\text{end}} = 0.0$.
  For curriculum learning, we adopt a smooth competence-based ordering inspired by
  \citet{platanios2019competence}.
  We first estimate per-problem difficulty as $d_i = 1 - \hat{a}_i$, where
  $\hat{a}_i$ is the pass rate computed
  from $K = 8$ independent rollouts of the student model prior to training. Examples
   are then sorted by
  difficulty in ascending order and presented easy-to-hard in a single pass without
  replacement,
  with small Gaussian noise ($\sigma = 0.02$) added to difficulty scores for
  stochastic tie-breaking.
  This provides a smooth, continuous curriculum that naturally increases problem
  difficulty as training
  progresses, without requiring discrete phase boundaries or explicit competence
  thresholds.

\section{Additional Experimental Results}

\subsection{Results on Nemotron}
We present the results on Nemotron model pair in Table~\ref{tab:main_nemotron}.
\label{app:exp_nemotron}
\begin{table}[t]
\caption{Results on Nemotron model pair (Nemotron-8B student, Nemotron-49B teacher).}
\label{tab:main_nemotron}
\centering
\small
\resizebox{\textwidth}{!}{
\begin{tabular}{lccccccc}
\toprule
\textbf{Method} & \textbf{MATH-500} & \textbf{Minerva} & \textbf{Olympiad} & \textbf{AMC23} & \textbf{AIME24} & \textbf{AIME25} & \textbf{Avg.} \\
\midrule
\multicolumn{8}{l}{\textit{Off-the-shelf Models}} \\
\midrule
Nemotron-8B (Student)    & 91.2 & 52.9 & 61.0 & 93.3 & 61.0 & 50.8 & 68.3 \\
Nemotron-49B (Teacher)   & 94.2 & 57.4 & 63.9 & 94.2 & 60.4 & 52.9 & 70.5 \\
\midrule
\multicolumn{8}{l}{\textit{Baselines}} \\
\midrule
OPSD                      & 90.8 & 49.6 & 61.9 & 93.4 & 63.2 & 48.9 & 67.9 \\
GRPO                     & 92.6 & 53.7 & 60.9 & 94.6 & 64.0 & 50.2 & 69.3 \\
OPD (RKL)                & 89.8 & 53.3 & 63.3 & 93.9 & 63.0 & 49.8 & 68.8 \\
\midrule
\multicolumn{8}{l}{\textbf{\textit{Proposed (SEAD components)}}} \\
\midrule
KL Annealing only                         & 93.4 & 56.6 & 62.5 & 94.3 & 62.7 & 50.4 & 69.9 \\
Token Zones only                & 92.0 & 54.0 & 62.1 & 93.8 & 63.1 & 51.5 & 69.4 \\
Token Zones + KL Annealing  & 90.4 & 54.8 & 61.6   & 94.7 & 64.6 & 51.0 & 69.5   \\
\bottomrule
\end{tabular}
}
\end{table}

\subsection{Training Dynamics}
\label{app:train_dynamics}

A known failure mode of OPD is premature entropy collapse: if the student is trained with reverse KL from the beginning, it is encouraged to concentrate probability mass on the teacher's current peak mode before it has explored alternative valid reasoning paths. SEAD addresses this through the interaction of token selection and KL annealing. Early in training, the coefficient $\alpha$ is high, so Zone C tokens receive forward-KL supervision; later, $\alpha$ decays toward zero, shifting the active objective toward reverse KL refinement. 

The key distinction from a global annealing baseline is that SEAD does not apply this uniformly across all tokens. Roughly 50\% of tokens are assigned to Zone A and skipped, because both teacher and student are already confident on them. Consequently, annealing only affects the subset of tokens where divergence choice matters: Zone C reasoning forks, where forward KL preserves the teacher's support, and Zone B partially mastered tokens, where reverse KL sharpens the student toward the teacher's preferred continuation. 

Figure~\ref{fig:entropy_dynamics} shows the average entropy for different methods during the training. Among these methods, vanilla OPD shows a clear decreasing trend in entropy, where all our SEAD-related methods keep the entropy at a similar level and higher than vanilla OPD, which proves the ability of our method to prevent entropy collapse.

\begin{figure}[h]
\centering
\includegraphics[width=\textwidth]{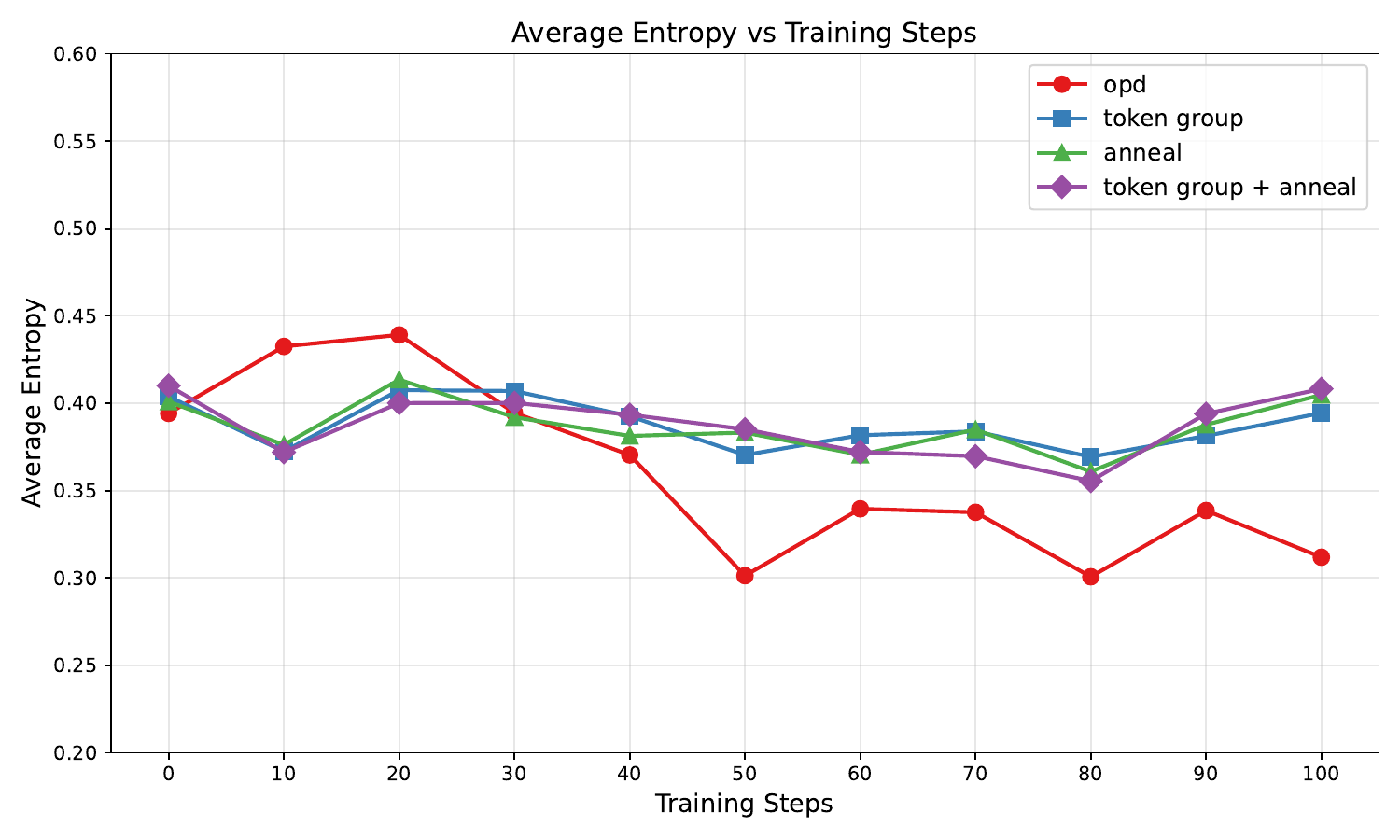}
\caption{Average token-level entropy over training steps. Vanilla OPD (red) suffers progressive entropy collapse ($0.39 \to 0.31$), while SEAD's token selection (blue), annealing (green), and their combination (purple) maintain stable entropy throughout training, preserving the diversity for token distribution.}
\label{fig:entropy_dynamics}
\end{figure}

\section{Broader Impacts}
\label{app:broader-impacts}

This work aims to improve the efficiency of distilling reasoning capabilities from larger language models into smaller student models. By reducing the amount of token-level supervision required during on-policy distillation, SEAD may lower the computational cost of post-training and make strong reasoning models more accessible to researchers and practitioners with limited hardware resources. This could have positive impacts by reducing energy consumption, lowering deployment costs, and enabling broader participation in research on reasoning-oriented language models.

At the same time, improving the efficiency of reasoning-model distillation may also make capable models easier to train, adapt, and deploy. Such models could be misused for generating misleading content, automating deceptive reasoning, assisting with harmful planning, or increasing the scale of low-cost model deployment without sufficient safety evaluation. Although our experiments focus on mathematical reasoning benchmarks, the same distillation principles may transfer to other domains where stronger reasoning capabilities require careful governance.

Our method does not directly address model safety, factuality, robustness, bias, privacy, or misuse prevention. In particular, SEAD optimizes the efficiency and effectiveness of teacher--student distillation, but it does not guarantee that the student inherits only desirable behaviors from the teacher. If the teacher model exhibits unsafe, biased, or unreliable behavior, the student may also learn such behavior. Therefore, practical deployment of models trained with SEAD should include standard safety evaluations, red-teaming, domain-specific risk assessment, and monitoring for unintended behavior.

The empirical scope of this paper is limited to mathematical reasoning benchmarks and two teacher--student model families. As a result, the broader societal implications of applying SEAD to open-domain assistants, code-generation systems, or high-stakes decision-support systems remain uncertain. We encourage future work to evaluate competence-aware distillation under broader safety, robustness, and alignment criteria before deployment in sensitive settings.

\section{Declaration of LLM Usage}
\label{app:llm-usage}

This paper studies large language models as the primary subject of research. The proposed method uses teacher and student language models during on-policy distillation, as described in the main text.

In addition, the authors used large language model tools to assist with non-substantive writing tasks, including improving clarity, editing grammar, drafting checklist responses, and revising explanatory text. All technical ideas, algorithmic design choices, theoretical claims, proofs, experimental results, tables, and conclusions were produced, checked, and approved by the authors. The authors take full responsibility for the content of the paper, including any text that was edited with LLM assistance.


\end{document}